\DeclareMathOperator*{\argmin}{argmin}
\newcommand{\PreserveBackslash}[1]{\let\temp=\\#1\let\\=\temp}
\newcommand{\norm}[1]{\lVert#1\rVert}
\newtheorem{thm}{Theorem}
\newtheorem{defi}{Definition}
\newtheorem{prop}{Proposition}
\newtheorem{proper}{Property}
\newtheorem{lem}{Lemma}
\newtheorem{rem}{Remark}
\newcommand{\regg}[1 ] {\sum_{i=\kappa+1}^{min\{d,U\}} \sigma_i^2(\boldsymbol{G}^{#1})}
\def \fk#1 {\mathcal{F}(\boldsymbol{\theta}^{#1},\boldsymbol{G}^{#1}, \boldsymbol{P}^{#1} )}
\def \lk#1 {\mathcal{L}(\boldsymbol{\theta}^{#1},\boldsymbol{G}^{#1}, \boldsymbol{P}^{#1})}
\def \thk {\left<\nabla_{\boldsymbol{\theta}}\mathcal{L}, \Delta(\boldsymbol{\theta}^k)\right> + \frac{\rho_{k+1}}{2}\norm{\Delta(\boldsymbol{\theta}^k)}^2_2}
\def \gk {\left<\nabla_{\boldsymbol{G}}\mathcal{L}, \Delta(\boldsymbol{G}^k)\right> + \frac{\rho_{k+1}}{2}\norm{\Delta(\boldsymbol{G}^k)}^2_F}
\def \pk {\left<\nabla_{\boldsymbol{P}}\mathcal{L}, \Delta(\boldsymbol{P}^k)\right> + \frac{\rho_{k+1}}{2}\norm{\Delta(\boldsymbol{P}^k)}^2_F}
\def \rth{\thk + \lambda_1 \norm{\boldsymbol{\theta}^{k+1}}^2_2}
\def \rgg{\gk + \lambda_2 \regg{k+1}}
\def \rp{\pk + \lambda_3 \norm{\boldsymbol{P}^{k+1}}_{1,2}}
\title{ Learning Personalized Attribute Preference via  Multi-task AUC Optimization }
\author
{
	Zhiyong Yang,\textsuperscript{1,2} Qianqian Xu,\textsuperscript{3}  Xiaochun Cao,\textsuperscript{1,2} Qingming Huang \textsuperscript{3,4,5} \thanks{The corresponding author.}\\
	\textsuperscript{1}{SKLOIS, Institute of Information Engineering, Chinese Academy of Sciences, Beijing, China}\\
	\textsuperscript{2}{School of Cyber Security, University of Chinese Academy of Sciences, Beijing, China}\\
	\textsuperscript{3}{Key Lab of Intell. Info. Process., Inst. of Comput. Tech., CAS, Beijing, China}\\
	\textsuperscript{4}{University of Chinese Academy of Sciences, Beijing, China}\\ \textsuperscript{5}{Key Laboratory of Big Data Mining and Knowledge Management, CAS, Beijing, China}\\
	\{yangzhiyong, caoxiaochun\}@iie.ac.cn, xuqianqian@ict.ac.cn, qmhuang@ucas.ac.cn	\\
}
\begin{document}
	
	\maketitle
	
	\begin{abstract}
		Traditionally, most of the existing attribute learning methods are trained based on the consensus of annotations aggregated from a limited number of annotators. However, the consensus might fail in settings, especially when a wide spectrum of annotators with different interests and  comprehension about the attribute words are involved. In this paper, we develop a novel multi-task method to understand and predict personalized attribute annotations. Regarding the attribute preference learning for each annotator as a specific task, we first propose a multi-level task parameter decomposition to capture the evolution from a highly popular opinion of the mass to highly personalized choices that are special for each person. Meanwhile, for personalized learning methods, ranking prediction is much more important than accurate classification. This motivates us to employ an Area Under ROC Curve (AUC) based loss function to improve our model. On top of the AUC-based loss, we propose an efficient method to evaluate the loss and gradients. Theoretically, we propose a novel closed-form solution for one of our non-convex subproblem, which leads to provable convergence behaviors. Furthermore, we also provide a generalization bound to guarantee a reasonable performance.  Finally, empirical analysis consistently speaks to the efficacy of our proposed method.
	\end{abstract}
	\section{Introduction}
	Visual attributes are semantic cues describing visual properties such as texture, color, mood, and \textit{etc}. Typical instances are  \textit{comfortable} or \textit{high heeled} for shoes, and \textit{smiling} or \textit{crying} for human faces. During the past decade, attribute learning has emerged as a powerful building block for a wide range of applications \cite{face1,reid2,zeroshot2,attrper}. \\
	\indent The status quo of the attribute learning methods are mostly based on the global labels aggregated from few annotators \cite{attrexp2,attrexp3,attrexp1}. Recently, the rise of online crowd-sourcing platforms (like Amazon Mechanical Turk) makes collecting attribute annotations from a  broad variety of annotators possible \cite{user2}, which offers us a chance to revisit the attribute learning. For consensus attribute learning, the underlying assumption is that the user decisions perturb slightly at random around the common opinion. However, different annotators might very well have distinct comprehension regarding the meaning of the attributes \textit{(typical ones like "open", "fashionable")}. This suggests that the gap between  personalized decisions and  common opinions could not be simply interpreted  as random noises. What's worse, one might even come to find conflicting results from different users. In such a case, there is a need to learn the consensus effects as well as personalized effects simultaneously, especially when personalized annotations are available. There are two crucial issues that should be noticed in this problem.\\
	\indent The grouping effects, lying in between consensus and personalized ones, also play an essential role in understanding the user-specific attribute annotations. As pointed out in the previous literature \cite{user2}, when understanding semantic attributes, humans often form the "school of thoughts" in terms of their cultural backgrounds and the way they interpret the semantic words. Though personalized effect might lead to conflicting results from different schools or groups, the users within the same group might very likely provide similar decisions. Moreover, different groups of people may probably favor distinct visual cues, as there is significant diversity among user groups. In other words, each user group should be assigned with a distinct feature subset. Accordingly, visual features and users should be simultaneously grouped to guarantee better performance. Furthermore, seeing that we cannot obtain the user-feature groups in advance, this constraint should be implicitly and automatically reflected via the structure of  model parameters.\\
	\indent  Unlike the consensus-based attribute predictions, preference learning (e.g. recommendation and image searching)  is more important than label prediction for personalized attribute learning. Under such circumstances, when the attribute words are used as keywords or tags, it should be guaranteed that the positive labeled instances are ranked higher than the negative ones. It is well-known that the Area Under the ROC curve (AUC) metric exactly meets this requirement \cite{AUCconcept}, and thus a better objective for our task.\\
	\indent  Our goal in this paper is to learn personalized attributes based on the two mentioned issues. More precisely, we regard attribute preference learning for a specific user as a task. On top of this, we propose a multi-task model for the problems we tackle. Our main contributions are listed as follows: a) In the multi-task model, we propose a three-level decomposition of the task parameters which includes a consensus factor, a user-feature co-clustering factor and a personalized factor. b) The proximal gradient descent method is adopted to solve the model parameters. Regarding our contribution here, we derive a novel closed-form solution for the proximal operator of the co-clustering factor and provide an efficient AUC-based evaluation method. c) Systematic theoretical analyses are carried out on the convergence behaviors and generalization bounds of our proposed method, while empirical studies are carried out for a simulation dataset and two real-world attribute annotation datasets. Both theoretical and empirical results suggest the superiority of our proposed algorithm. \textit{\textbf{The codes are now available.}} \footnote{https://joshuaas.github.io/publication.html} 
	\section{Related Work}
	\subsubsection{Attribute learning}Attribute learning has long been playing a central role in many machine learning and computer vision problems. Along this line of research, there are some previous studies that investigate the personalization of attribute learning. \cite{user1} learns user-specific attributes with  an adaption process. More precisely, a general model is first trained based on a large pool of data. Then a small user-specific dataset is employed to adapt the trained model to user-specific predictors. \cite{user2} argues that one attribute might have different interpretations for different groups of persons. Correspondingly, a shade discovery method is proposed therein to leverage group-wise user-specific attributes. Both works adopt two-stage or multi-stage models and even extra dataset. For the group modeling, \cite{user2} only focuses on user-level grouping while ignores the grouping effect of the features. Furthermore, the merit of AUC is also neglected. In contrast, we propose a fully automatic AUC-based attribute preference learning model where the user-feature coclustering effect is considered.
	\subsubsection{Multi-task Learning}
	Multi-task learning  aims at improving the generalization
	performance by sharing information among multiple tasks. Many efforts have been made to improve multi-task learning \cite{RMTL,rMTFL,RAMUSA,MTATT,comitlasso}, etc. Recently, there is also a wave to investigate the clustering and grouping based multi-task learning \cite{CMTL,GOMTL,cocluster}. Among this works, \cite{cocluster} is the most relevant work compared with our work since we adopt the co-clustering regularization proposed therein. However, our work differs significantly in that 1) our model is specially designed for the attribute preference learning problem; 2) we propose a novel hierarchical decomposition scheme for the model parameters and; 3) we propose a novel closed-form solution for the proximal mapping of the co-clustering penalty; 4) we focus on efficient AUC optimization instead of regression or classification.
	\section{Model Formulation}
	In this section, we propose a novel AUC-based multi-task model. Specifically, we first introduce the notations used in this paper, followed by problem settings in our model and a multi-level parameter decomposition. After that, we systematically elaborate two building blocks of our model: the AUC-based loss and evaluation, and the regularization scheme.
	\subsection{Notations}
	\indent $\left<\cdot,\cdot\right>$ denotes the inner product for two matrices or two vectors. The singular values of a matrix $\boldsymbol{A}$ are denoted as $\sigma_1(\boldsymbol{A}), \cdots,\sigma_m(\boldsymbol{A})$ such that $\sigma_1(\boldsymbol{A}) \ge \sigma_2(\boldsymbol{A})\ge \cdots, \ge \sigma_m(\boldsymbol{A}) \ge 0$. $\boldsymbol{I}$ is the identity matrix, 
	$I_{[\mathcal{A}]}$ is the indicator function of the set $\mathcal{A}$, $\boldsymbol{1}$ denotes the all-one vector or matrix. $\mathcal{U}(a,b)$ denotes the uniform distribution and $\mathcal{N}(\mu, \sigma^2)$ denotes the normal distribution. $\otimes$ denotes the Cartesian product.
	
	\subsection{Problem Settings}
	\begin{figure}[ht]
		\centering
		\subfloat{
			\includegraphics[width =0.85\columnwidth]{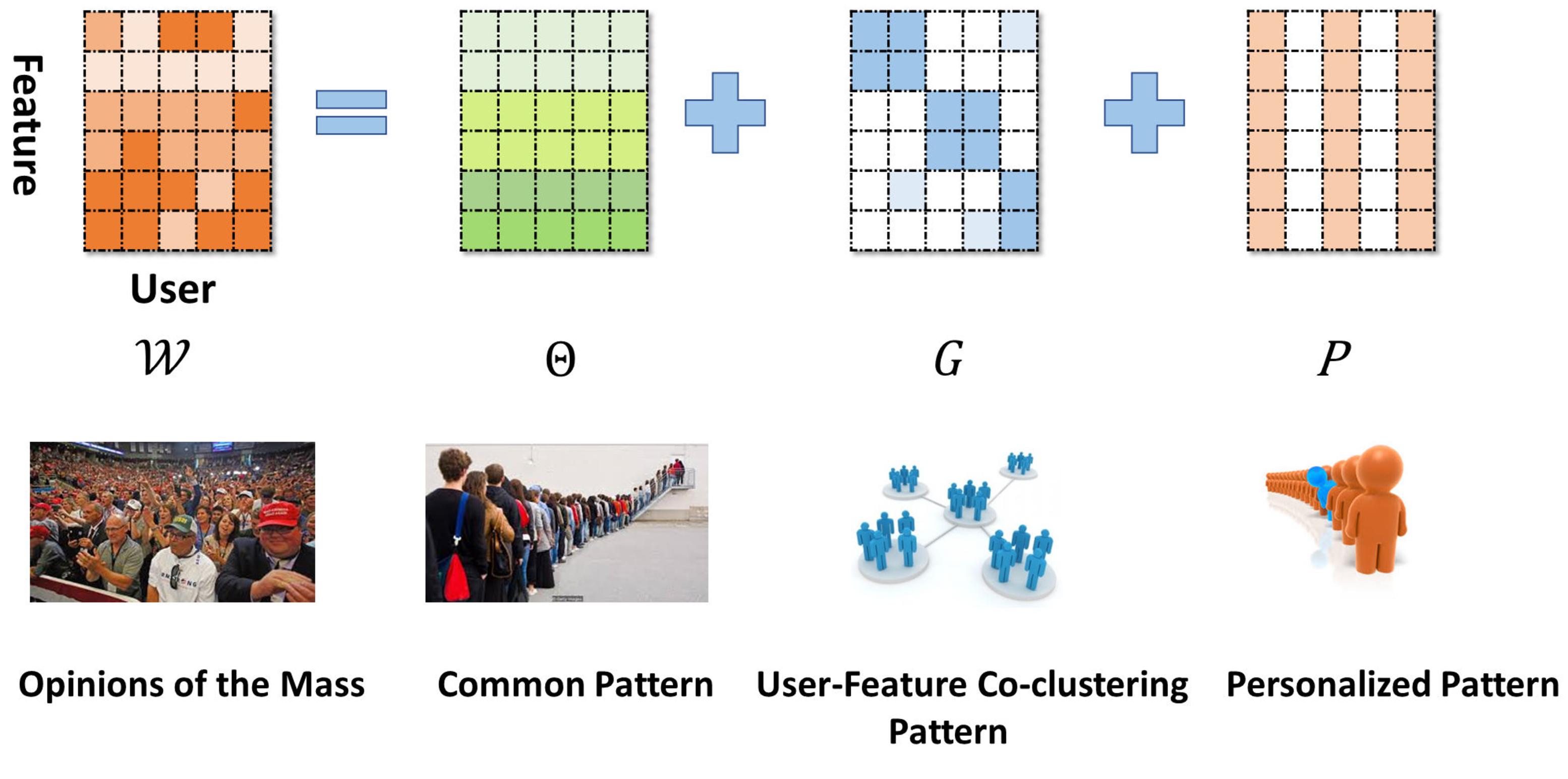}
		}
		\caption{An illustration of the Multi-level Decomposition of the model parameters.}
		\label{fig:decomp}
	\end{figure}
	\indent For a given attribute\footnote{In our model, different attributes are learned separately. In the rest of this paper, the discussions  focus on a specific attribute. }, assume that we have $U$ users who have annotated a given set of images. Further, we assume that the $i$th user labeled $n_i$ images with $n_{+,i}$ positive labels and $n_{-,i}$ negative labels. $\mathcal{S}_{+,i}=
	\{k \ |  \ y^{(i)}_k = 1\}$  and $\mathcal{S}_{-,i}=
	\{k \ |  \ y^{(i)}_k = -1\}$. We denote the training data as: $\mathcal{S} = \left\{(\boldsymbol{X}^{(1)}, \boldsymbol{y}^{(1)}), \cdots, (\boldsymbol{X}^{(U)}, \boldsymbol{y}^{(U)})\right\}$. For $\mathcal{S}$, $\boldsymbol{X}^{(i)} \in \mathbb{R}^{n_{i} \times d}$
	is the image feature inputs for the images that the $i$th user labeled. Each row of $\boldsymbol{X}^{(i)}$ represents the extracted features for a corresponding image.   $\boldsymbol{y^{(i)}} \in \{-1,1\}^{n_{i}}$ is the corresponding label vector. If $y^{(i)}_k =1 $, then the user thinks that the $k$th image bears the given attribute, otherwise we have $y^{(i)}_k =-1 $.\\
	\indent Taking advantage of the multi-task learning paradigm, the attribute preference learning for a user is regarded as a specific task.  Our goal is then to learn all the task models $\boldsymbol{f}^{(i)}(x)$ , where, for each task, a linear model is learned as the scoring function, i.e.,  $\boldsymbol{f}^{(i)}(\boldsymbol{x})= \boldsymbol{W}^{(i)^\top}\boldsymbol{x}$.\\
	\indent  As shown in the introduction,  it is natural to observe diversity in personalized scores. However, this diversity could by no means goes arbitrary large. In fact, we could interpret such  limited diversity in a consensus-to-personalization manner. A common pattern is shared among the mass that captures the popular opinion. Different people might have different bias and preference, which drives them away from a consensus. Users sharing similar biases tend to form groups. The users within a  group  share similar biases towards the popular opinion based on a similar subset of the features of the object. Finally, a highly personalized user in a group
	tends to adopt an extra bias toward the group opinion. Mathematically, this interpretation induces a multi-level decomposition of the model weights :
	$\boldsymbol{W}^{(i)} = \boldsymbol{\theta} + \boldsymbol{G}^{(i)} + \boldsymbol{P}^{(i)}. $
	$\boldsymbol{\theta} \in \mathbb{R}^{d\times1}$ is the  common factor that captures the popular global preference. $\boldsymbol{G}^{(i)} \in \mathbb{R}^{d\times1} $ is the grouping factor for the $i$-th task.  For mathematical convenience, we denote $\boldsymbol{G}=[\boldsymbol{G}^{(1)},\cdots, \boldsymbol{G}^{(U)}]$, and we have $\boldsymbol{G}\in \mathbb{R}^{d\times U }$.
	$\boldsymbol{P}^{(i)}$ is the user-specific factor mentioned above. Similarly, we define $\boldsymbol{P} = [\boldsymbol{P}^{(1)},\cdots,\boldsymbol{P}^{(U)}]$, and $\boldsymbol{P} \in \mathbb{R}^{d \times U}$. An illustration of this decomposition is shown in Figure \ref{fig:decomp}.
	
	With all the above-mentioned settings, we adopt a general objective function in the form:
	\begin{equation}\label{genform}
	\min_{\boldsymbol{\theta}, \boldsymbol{G}, \boldsymbol{P}} \sum\limits_{i=1}^{U}\ell_i(\boldsymbol{f}^{(i)}, \boldsymbol{y}^{(i)}) + \lambda_1\mathcal{R}_1(\boldsymbol{\theta}) + \lambda_2 \mathcal{R}_2(\boldsymbol{G}) + \lambda_3\mathcal{R}_3(\boldsymbol{P}).
	\end{equation}
	Given (\ref{genform}), there are two crucial building blocks to be determined:
	\begin{itemize}
		\item The empirical loss function for a specific user $i$ : $\ell_i(\cdot, \cdot)$ which directly induces AUC optimization; 
		\item Regularization terms  $\mathcal{R}_1(\boldsymbol{\theta})$,  $\mathcal{R}_2(\boldsymbol{G})$, and  $\mathcal{R}_3(\boldsymbol{P})$ which are defined by prior constraints on $\boldsymbol{W}$.
	\end{itemize}
	In what follows, we will elaborate the formulation of two building blocks, respectively. 
	\subsection{Regularization}
	\indent For the common factor $\boldsymbol{\theta}$, we simply adopt the most widely-used $\ell_2$ regularization $\mathcal{R}_1(\boldsymbol{\theta}) = \norm{\boldsymbol{\theta}}_2^2$ to reduce the model complexity. For $\boldsymbol{G}$, as mentioned in the previous parts, what we pursue here is a user-feature co-clustering effect. A previous work in \cite{cocluster}  shows that one way to simultaneously cluster the rows and columns of a matrix in $\boldsymbol{R}^{m \times n}$ into $\kappa$ groups is to penalize  the sum of squares of the bottom $min\{n,m\} - \kappa$ singular values. This motivates us to adopt a regularizer on $\boldsymbol{G}$ in the following form :$
	\mathcal{R}_2(\boldsymbol{G}) = \sum_{\kappa+1}^{min\{d,U\}}\sigma_i^2(\boldsymbol{G}).$
	For any user $i$, a non-zero column $\boldsymbol{P}^{(i)}$ is favorable only when she/he has  a significant disagreement with the common-level and the group-level results. This inspires us to define $\mathcal{R}_3(\boldsymbol{P}) =\norm{\boldsymbol{P}}_{1,2}$ norm to induce column-wise sparsity. 
	\subsection{Empirical Loss and Its Evaluation}
	Since the empirical loss is evaluated separately for each user, without loss of generality,  the following discussion only focuses on a given user $i$. 
	\subsubsection{Empirical Loss} AUC is defined as the probability that a randomly sampled positive instance has a higher predicted score than a randomly sampled negative instance. Since we need to minimize our objective function, we focus on the loss version of AUC, i.e., the mis-ranking probability.
	Though the data distribution is unknown, given each user $u_i$ and $\mathcal{S}_{+,i}$, $\mathcal{S}_{-,i}$ defined in the problem setting, we could attain a finite sample-based estimation of the loss version of AUC: \[ \ell^{(i)}_{AUC} = \sum\limits_{x_p \in \mathcal{S}_{+,i}}\sum\limits_{x_q \in \mathcal{S}_{-,i}} \frac{I(\boldsymbol{x}_p, \boldsymbol{x}_q)}{n_{+,i}n_{-,i}},\]
	where $I(\boldsymbol{x}_p, \boldsymbol{x}_q)$ is a discrete mis-ranking punishment in the form: $ I(\boldsymbol{x}_p, \boldsymbol{x}_q) =  {I}_{[f^{(i)}(\boldsymbol{x}_p) > f^{(i)}(\boldsymbol{x}_q)]} + \frac{1}{2}~ {I}_{[f^{(i)}(\boldsymbol{x}_p) = f^{(i)}(\boldsymbol{x}_q)]}. $
	It is easy to see that $\ell^{(i)}_{AUC}$ is exactly the mis-ranking frequency for user $i$ on the given dataset. Unfortunately, optimizing this metric directly is an $\mathcal{NP}$ hard problem. To address this issue, we adopt the squared surrogate loss $s(t)= (1-t)^2$ \cite{AUC2}. Accordingly, the empirical loss $\ell_i(\boldsymbol{f}^{(i)}, \boldsymbol{y}^{(i)})$ could be defined as: $\ell_i(\boldsymbol{f}^{(i)}, \boldsymbol{y}^{(i)}) = \sum\limits_{x_p \in \mathcal{S}_{+,i}}\sum\limits_{x_q \in \mathcal{S}_{-,i}} \frac{s\Big(\boldsymbol{f}^{(i)}(\boldsymbol{x}_p) -\boldsymbol{f}^{(i)}(\boldsymbol{x}_q) \Big)}{n_{+,i}n_{-,i}}
	$.
	\begin{figure}[ht]
		\centering
		\subfloat{
			\includegraphics[width =0.85\columnwidth]{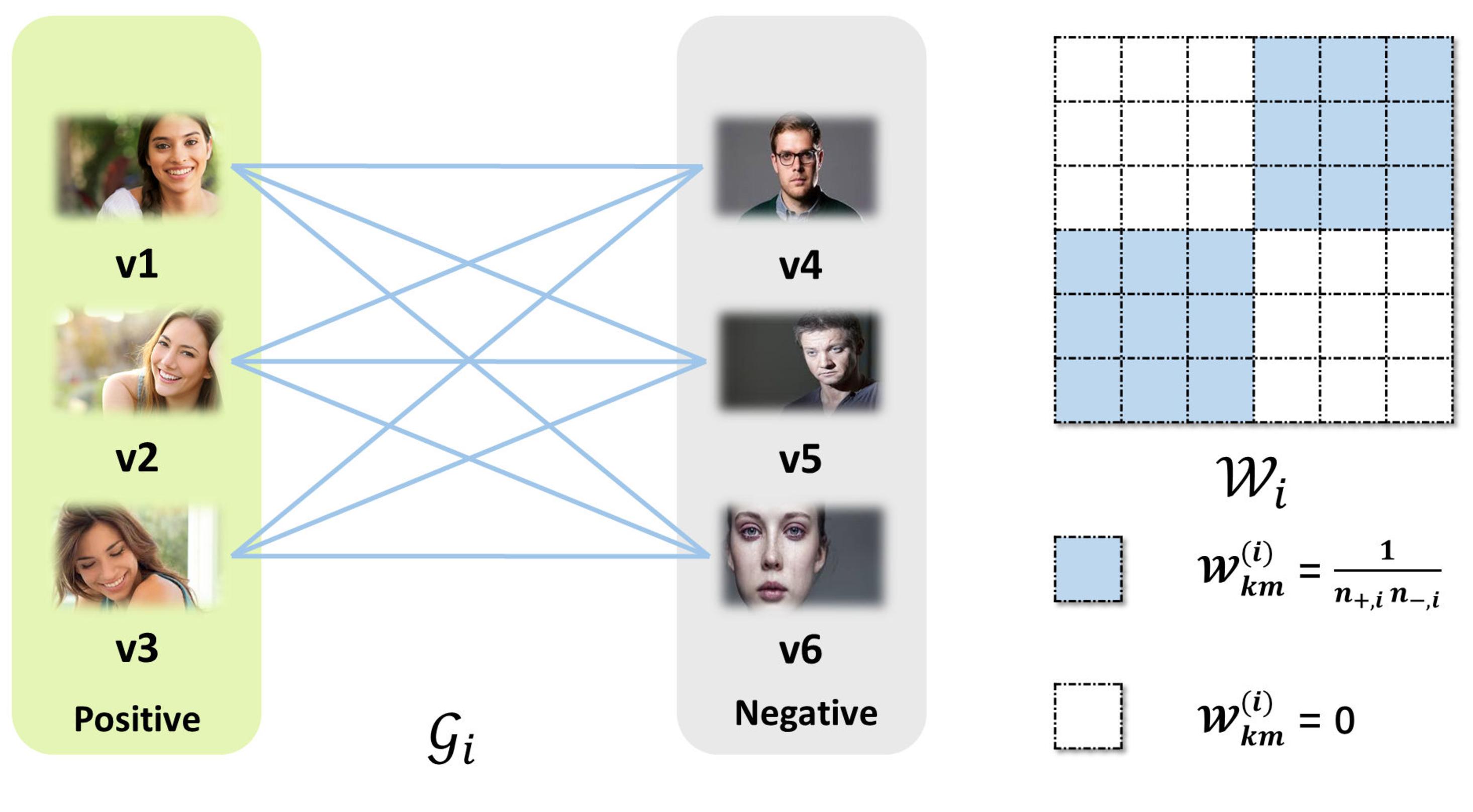}
		}
		\caption{An illustration of the AUC graph, taking the annotation for attribute \textit{smile} as an example.}
		\label{fig:auc}
	\end{figure}
	\subsubsection{Efficient AUC-based Evaluation} At the first glance, the pair-wise AUC loss induces much heavier computation burdens than the instance-wise losses. It is interesting  to note that, after carefully reformulating $\ell_i$, the computational burden coming from the pair-wise formulation could be perfectly eliminated. To see this, let us define a graph as $\mathcal{G}^{(i)} = (\mathcal{V}^{(i)}, \mathcal{E}^{(i)},\mathcal{W}^{(i)})$. The vertex set $\mathcal{V}^{(i)}$ is the set of all the instances in $(\boldsymbol{X}^{(i)},\boldsymbol{y}^{(i)})$.  There exists an edge $(k,m) \in \mathcal{E}^{(i)}$ with weight $\mathcal{W}^{(i)}_{km} = \frac{1}{n_{+,i}n_{-,i}}$ if and only if $y^{(i)}_k \ne y^{(i)}_m$. This graph is further illustrated in Figure \ref{fig:auc}. Given $\mathcal{W}^{(i)}$,  the Laplacian matrix  $\boldsymbol{L}^{(i)}$ of $\mathcal{G}_i$ 
	could be expressed as: $\boldsymbol{L}^{(i)} = diag(\mathcal{W}^{(i)}\boldsymbol{1}) - \mathcal{W}^{(i)}.$
	The empirical loss could be reformulated as a quadratic form defined by $\boldsymbol{L}^{(i)}$:
	$
	\ell_i(\boldsymbol{f}^{(i)}, \boldsymbol{y}^{(i)}) = \frac{1}{2} (\boldsymbol{\tilde{y}^{(i)}}-\boldsymbol{f}^{(i)})^\top\boldsymbol{L}^{(i)}(\boldsymbol{\tilde{y}^{(i)}}-\boldsymbol{f}^{(i)}),
	$
	where $\boldsymbol{\tilde{y}^{(i)}} = \frac{\boldsymbol{y}^{(i)}+1}{2} $. We see the AUC loss evaluation involves computing a quadratic form of $\boldsymbol{L}^{(i)}$. The following proposition gives a general result which  suggests an efficient method to compute $\boldsymbol{A}^\top\boldsymbol{L}^{(i)}\boldsymbol{B}$, and $\boldsymbol{A}^\top \boldsymbol{L}^{(i)}$. 
	
	\begin{prop}
		For any $\boldsymbol{A} \in \mathbb{R}^{n_i \times a}$ and $\boldsymbol{B} \in \mathbb{R}^{n_i \times b}$ ,where $a$ and $b$ are positive integers.  $\boldsymbol{A}^\top\boldsymbol{L}^{(i)}\boldsymbol{B}$, and $\boldsymbol{A}^\top\boldsymbol{L}^{(i)}$ could be finished within $\mathcal{O}\big(n_i(a+b+ ab)\big) =\mathcal{O}(abn_i)$ and $\mathcal{O}(an_i)$, respectively.
	\end{prop}
	\begin{rem}
		According to this proposition, the complexity of $\boldsymbol{A}^\top\boldsymbol{L}^{(i)}\boldsymbol{B}$
		could be reduced from $O(abn_{+,i}n_{-,i})$ to $O(ab(n_{+,i} + n_{-,i}))$, whereas the complexity of $\boldsymbol{A}^\top\boldsymbol{L}$ could be reduced from $O(an_{+,i}n_{-,i})$ to $O(a(n_{+,i} + n_{-,i}))$.
	\end{rem}

	\indent To end this section, we summarize our final objective function as: 
	\begin{equation*}
	\begin{split}
	(P^*)\min_{\boldsymbol{\theta}, \boldsymbol{G}, \boldsymbol{P}} ~ &\underbrace{\sum_{i}~\sum\limits_{\boldsymbol{x}_p \in \mathcal{S}_{+,i} } \sum\limits_{\boldsymbol{x}_q \in \mathcal{S}_{-,i} }\frac{s\Big(\boldsymbol{W}^{(i)^\top }(\boldsymbol{x}_p-\boldsymbol{x}_q)  \Big)}{n_{+,i}n_{-,i}}}_{\mathcal{L}(\boldsymbol{W})}\\ &+\lambda_1 \underbrace{\norm{\boldsymbol{\theta}}^2_{2}}_{\mathcal{R}_1(\boldsymbol{\theta})}
	+ \lambda_2 \underbrace{\sum_{\kappa+1}^{min\{d,U\}}\sigma^2_i(\boldsymbol{G})}_{\mathcal{R}_2(\boldsymbol{G})}
	+ \lambda_3 \underbrace{\norm{\boldsymbol{P}}_{1,2}}_{\mathcal{R}_3(\boldsymbol{P})}\\
	s.t ~ ~ ~ ~ ~ &\boldsymbol{W}^{(i)} = \boldsymbol{\theta} + \boldsymbol{G}^{(i)} + \boldsymbol{P}^{(i)}
	\end{split}
	\end{equation*}
	For the sake of simplicity, we denote the empirical loss as $\mathcal{L}(\boldsymbol{W})$,
	and we denote the objective function of ($P^*$) as $\mathcal{F}(\boldsymbol{\theta}, \boldsymbol{G},\boldsymbol{P})$.
	Note that $\mathcal{L}(\boldsymbol{W})$ should be a function of $\boldsymbol{\theta}$, $\boldsymbol{G}$, and $\boldsymbol{P}$.  
	\section{Optimization}
	We adopt  the proximal gradient method as the optimizer for our problem. In this section, we introduce the outline of the optimization method and provide a novel closed-form solution for the proximal operator of $\mathcal{{R}}_2(\boldsymbol{G})$.   \\
	\indent For each iteration step $k$, giving a reference point  $ \boldsymbol{W}^{ref_k} = (\boldsymbol{\theta}^{ref_k}, \boldsymbol{G}^{ref_k}, \boldsymbol{P}^{ref_k})$, then the proximal gradient method updates the variables as :
	\begin{align}
	& \boldsymbol{\theta}^k  ~:= \argmin_{\boldsymbol{\theta}} \dfrac{1}{2} \left\norm{\boldsymbol{\theta} -\tilde{\boldsymbol{\theta}^k} \right}_2^2 + \frac{\lambda_1}{\rho_k} \norm{\boldsymbol{\theta}}^2_2\label{Ptheta} \\
	&\boldsymbol{G}^{k}  := \argmin_{\boldsymbol{G}} \dfrac{1}{2}\left\norm{\boldsymbol{G} - \tilde{\boldsymbol{G}}^{k}\right}_F^2 + \frac{\lambda_2}{\rho_k} \sum_{\kappa+1}^{min\{d,U\}}\sigma^2_i(\boldsymbol{G})\label{Pg}\\
	&\boldsymbol{P}^k  := \argmin_{\boldsymbol{P}} \dfrac{1}{2}\left\norm{\boldsymbol{P} - \tilde{\boldsymbol{P}}^k\right}_F^2 + \frac{\lambda_3}{\rho_k} \norm{\boldsymbol{P}}_{1,2}\label{Pp}
	\end{align}	
	where: 
	$\tilde{\boldsymbol{\theta}}^k = \boldsymbol{\theta}^{ref_k} - \dfrac{1}{\rho_k}\nabla_{\theta}\mathcal{L}(\boldsymbol{W}^{ref_k})$,
	$\tilde{\boldsymbol{G}}^{k}=\boldsymbol{G}^{ref_k} - \dfrac{1}{\rho_k}\nabla_{\boldsymbol{G}}\mathcal{L}(\boldsymbol{W}^{ref_k})$,
	$\tilde{\boldsymbol{P}}^k=\boldsymbol{P}^{ref_k} - \dfrac{1}{\rho_k}\nabla_{P}\mathcal{L}(\boldsymbol{W}^{ref_k})$
	
	\noindent and $\rho_k$ is chosen with a line-search strategy, where we keep on updating $\rho_k =\alpha\rho_k, \alpha>1$ until it satisfies:
	\begin{equation}\label{upper bound}
	\mathcal{L}(\boldsymbol{W}) <
	\mathcal{L}(\boldsymbol{W}^{ref_k}) + \Psi_{\rho_k}(D\boldsymbol{\theta})
	+  \Psi_{\rho_k}(D\boldsymbol{G}) +  \Psi_{\rho_k}(D\boldsymbol{P}).
	\end{equation}
	Here $D \boldsymbol{\theta} = \boldsymbol{\theta} - \boldsymbol{\theta}^{ref_k}$,$D \boldsymbol{G}= \boldsymbol{G} - \boldsymbol{G}^{ref_k}$ and $D \boldsymbol{P} = \boldsymbol{P} - \boldsymbol{P}^{ref_k}$, $\Psi_{\rho_k}(D{A}) =\left<\nabla_{\boldsymbol{A}} \mathcal{L}(\boldsymbol{W}^{ref_k}),  D \boldsymbol{A} \right> + \frac{\rho_k}{2} \left< D \boldsymbol{A}, D \boldsymbol{A} \right> $. 
	\begin{rem}
		The existence of such $\rho_k$ is guaranteed by the Lipschitz continuity of $\nabla\mathcal{L}(\boldsymbol{W})$. Note that, we choose the last historical update of the parameter as the reference point i.e. $\boldsymbol{W}^{ref{_k}} = \boldsymbol{W}^{k-1}$. 
	\end{rem} 
	\indent The solution to Eq.(\ref{Ptheta}) and Eq.(\ref{Pp}) directly follows the proximal operator of the $\ell_2$ norm and the $\ell_{1,2}$ norm \cite{optml}. For Eq.(\ref{Pg}), we provide a novel closed-form solution in the following subsection. 
	\subsubsection{A closed-form solution for $\boldsymbol{G}$ subproblem}
	Note that $\sum_{\kappa+1}^{min\{d,U\}}\sigma_i(\boldsymbol{G})^2$ is not convex, solving the $\boldsymbol{G}$ subproblem is challenging. Conventionally, this problem is  solved  in an alternative manner \cite{cocluster} which is inefficient and lack of theoretical guarantees. Thanks to the general singular value thresholding framework \cite{gsvt,trunWNN}, we could obtain a closed-form optimal solution according to the following proposition.
	\begin{prop}
		An Optimal Solution of (\ref{Pg}) is:
		\begin{equation} \label{solveP}
		\boldsymbol{G}^* = \boldsymbol{U}\mathcal{T}_{\kappa,\frac{\lambda_3}{\rho_k}}(\boldsymbol{\Sigma})\boldsymbol{V}^\top
		\end{equation}
		where $\boldsymbol{U}\boldsymbol{\Sigma}\boldsymbol{V}^\top$ is a SVD decomposition of $\tilde{\boldsymbol{G}}^{k}$, $\mathcal{T}_{\kappa,c}$ maps $\boldsymbol{\Sigma} =diag(\sigma_1,\cdots,\cdots,\sigma_{min\{d,U\}})$ to a diagonal matrix having the same size with $\mathcal{T}_{\kappa,c}(\boldsymbol{\Sigma})_{ii} = (\frac{1}{2c+1})^{I[i>\kappa]}\sigma_i$.
	\end{prop}
	
	%
	%
	%
	\section{Theoretical Analysis}
	\textit{Due to the space limit, all the details of the proofs are released at a Github homepage\footnote{https://joshuaas.github.io/publication}.}
	\subsection{Lipschitz Continuity of the Gradients of $\mathcal{L}(\boldsymbol{W})$}
	In the preceding section, we have pointed out that the Lipschitz Continuity of the Gradients of $\mathcal{L}(\boldsymbol{W})$ is a necessary condition for the success of the line search process to find $\rho_k$. Now in the following theorem, we formally prove this property as theoretical support for the optimization method.
	\begin{thm}[Lipschitz Continuous Gradient]
		Suppose that the data is bounded in the sense that:
		\[\forall i, ~\norm{\boldsymbol{X}^{(i)}}_2 =\sigma_{X_i} < \infty, ~n_{+,i} \ge 1, ~n_{-,i} \ge 1.\]
		Given two arbitrary distinct  parameters $\boldsymbol{W}, \boldsymbol{W}'$, we have:
		\begin{equation*}
		\norm{\nabla{\mathcal{L}(vec(\boldsymbol{W}))} - \nabla{\mathcal{L}(vec(\boldsymbol{W}'))}} \leq \gamma \Delta \boldsymbol{W}
		\end{equation*}
		where: $\gamma 
		=3U\sqrt{(2U+1)}\max_{i} \left\{\dfrac{n_i\sigma^2_{X_i}}{n_{+,i}n_{-,i}}\right\}
		$, $vec(\boldsymbol{W})= [\boldsymbol{\theta}, vec(\boldsymbol{G}), vec(\boldsymbol{P})]$, $\Delta \boldsymbol{W} = \norm{vec(\boldsymbol{W}) - vec(\boldsymbol{W}') }$.
	\end{thm}
	\subsection{Convergence Analysis}
	\indent Since  the regularization term $\sum_{i=\kappa+1}^{min\{d,U\}} \sigma_i^2(\boldsymbol{G})$ is non-convex, the traditional sub-differential  is not fully available anymore. In this paper, we adopt the generalized  sub-differential defined in \cite{var,liu}. To guarantee a nonempty sub-differential set, the objective function must be lower semi-continuous. In our problem, it is obvious that $\mathcal{L}(\boldsymbol{W})$, $\mathcal{R}_1(\boldsymbol{\theta})$, $\mathcal{R}_2(\boldsymbol{P})$ are lower semi-continuous functions by their continuity. For the non-convex term  $\sum_{i=\kappa+1}^{min\{d,U\}} \sigma_i^2(\boldsymbol{G})$, the following lemma shows that is also continuous and thus lower semi-continuous. 
	\begin{lem}\label{conti}
		The function $\sum_{i=\kappa+1}^{min\{d,U\}} \sigma_i^2(\boldsymbol{G})$ is continuous with respect to $\boldsymbol{G}$.
	\end{lem}
	\noindent Then the convergence properties of the proposed method could be summarized in the following theorem. Here we define $\Delta(\boldsymbol{\theta}^k) = {\boldsymbol{\theta}^{k+1} - \boldsymbol{\theta}^k}, ~\Delta(\boldsymbol{G}^k) = {\boldsymbol{G}^{k+1} - \boldsymbol{G}^k}, \Delta(\boldsymbol{P}^k) = {\boldsymbol{P}^{k+1} - \boldsymbol{P}^k}. $
	
	\noindent\begin{thm} Assume that the initial solutions ${\boldsymbol{\theta}^0, \boldsymbol{G}^0, \boldsymbol{P}^0}$ are bounded, with the line-search strategy defined in (\ref{upper bound}), the following properties hold :
		\begin{itemize}
			\item 1) The sequence $\{ \fk{k} \}$ is non-increasing in the sense that : $\forall k, \exists C_{k+1} > 0,$
			\begin{equation}
			\begin{split}
			&\fk{k+1} \le \fk{k} - \\& C_{k+1}
			( \norm{\Delta(\boldsymbol{\theta}^k)}_2^2+ \norm{\Delta(\boldsymbol{G}^k)}_F^2 + \norm{\Delta(\boldsymbol{P}^k)}_F^2  )
			\end{split}
			\end{equation}
			
			\item 2) $\lim_{k \rightarrow \infty} \boldsymbol{\theta}^k - \boldsymbol{\theta}^{k+1}  = 0$, \ $\lim_{k \rightarrow \infty} \boldsymbol{G}^k - \boldsymbol{G}^{k+1}  = 0$,\\$\lim_{k \rightarrow \infty} \boldsymbol{P}^k - \boldsymbol{P}^{k+1}  = 0$.
			\item 3) The parameter sequences $\{\boldsymbol{\theta}^k\}_k$, $\{\boldsymbol{G}^k\}_k$, $\{\boldsymbol{P}^k\}_k$ are bounded
			\item 4) Every limit point of $\{\boldsymbol{\theta}^k, \boldsymbol{G}^k, \boldsymbol{P}^k\}_k$ is a critical point of the problem. 
			\item 5) $\forall ~T \ge 1, \exists~ C_{T} >0$ :
			\begin{align*}
			&\min_{0\le k < T}\left( \norm{\Delta(\boldsymbol{\theta}^k)}_2^2\right) \le  \frac{C_{T}}{T}, \ \min_{0\le k < T}\left( \norm{\Delta(\boldsymbol{G}^k)}_F^2\right) \le  \frac{C_{T}}{T}, \\
			&\min_{0\le k < T}\left( \norm{\Delta(\boldsymbol{P}^k)}_F^2\right) \le  \frac{C_{T}}{T}.
			\end{align*}
		\end{itemize}
	\end{thm}
	\subsection{Generalization Bound}
	Define the parameter set $\Theta$ as :
	\begin{equation*}
	\begin{split}
	&\Theta=\big\{(\boldsymbol{\theta},\boldsymbol{G},\boldsymbol{P}):\sqrt{\mathcal{R}_1({\boldsymbol{\theta}})} \le \psi_1, \mathcal{R}_2(\boldsymbol{G}) \le \psi_2, \\
	&\norm{\boldsymbol{G}}_2 \le \sigma_{max}< \infty,\mathcal{R}_3(\boldsymbol{P}) \le \psi_3  \big\}
	\end{split}
	\end{equation*}
	We have the following uniform bound.
	\begin{thm}
		Assume that $\exists \Delta_{\chi} > 0$,  all the instances are sampled such that, 
		$\norm{x} \le \Delta_{\chi} $ .Define $C=(\psi_1 +\sqrt{\psi_2 + \kappa \cdot \sigma_{max}^2 } + \psi_3)$ $\zeta$ as
		$ \zeta =\Delta_\chi C$,
		we have, for all $\delta \in(0,1)$, for all $(\boldsymbol{\theta},\boldsymbol{G},\boldsymbol{P}) \in \Theta$ :
		\begin{equation*}
		\begin{split}
		\mathbb{E}_\mathcal{D}(\sum_i\ell^{(i)}_{AUC}) \le &\mathcal{L}(\boldsymbol{W}) + \sum_{i=1}^U\frac{B_1}{\sqrt{(n_i\chi_i(1-\chi_i))}}\\
		&+ B_2\sqrt{\frac{ln(\frac{2}{\delta})}{\sum_{i=1}^Un_i\chi_i(1-\chi_i)}} 
		\end{split}
		\end{equation*}
		holds with probability at least $1-\delta$, where $B_1={8\sqrt{2}C\Delta_{\chi}(1+\zeta)}$, $B_2 = 	10\sqrt{2} (1+\zeta)\zeta$, $\chi_i = \frac{n_{+,i}}{n_i}$. The distribution $\mathcal{D} = \otimes_{i=1}^U (\mathcal{D}_{+,i} \otimes \mathcal{D}_{-,i})$, where for user $i$, $\mathcal{D}_{+,i}$, $\mathcal{D}_{-,i}$ are conditional distributions for positive and negative instances, respectively.
	\end{thm}
	\begin{rem}
		According to Theorem 2, the loss function is non-increasing. For the solution of our method $(\boldsymbol{\theta}^*,\boldsymbol{G}^*,\boldsymbol{P}^*)$, we then have:
		$\sqrt{\mathcal{R}_1({\boldsymbol{\theta^*}})} \le \sqrt{\frac{ \mathcal{F}(\boldsymbol{\theta}^{0},  \boldsymbol{G}^{0},\boldsymbol{P}^{0})   }  {\lambda1}},$
		$\mathcal{R}_2({\boldsymbol{G^*}}) \le \frac{\mathcal{F}(\boldsymbol{\theta}^{0}, \boldsymbol{G}^{0} ,\boldsymbol{P}^{0})  }{\lambda2}$,  $\mathcal{R}_3({\boldsymbol{P^*}}) \le \frac{\mathcal{F}(\boldsymbol{\theta}^{0}, \boldsymbol{G}^{0},\boldsymbol{P}^{0})   }{\lambda3}$.
		Meanwhile, it could be derived from Theorem 2 that $\boldsymbol{G}^*$ is bounded. By choosing $\psi_1 =\sqrt{\frac{ \mathcal{F}(\boldsymbol{\theta}^{0},  \boldsymbol{G}^{0},\boldsymbol{P}^{0})   }  {\lambda1}}, \psi_2 =\frac{\mathcal{F}(\boldsymbol{\theta}^{0}, \boldsymbol{G}^{0} ,\boldsymbol{P}^{0})  }{\lambda_2}, \psi_3= \frac{\mathcal{F}(\boldsymbol{\theta}^{0}, \boldsymbol{G}^{0},\boldsymbol{P}^{0})   }{\lambda3} $, all solutions chosen by our algorithm  belongs to $\Theta$. Then, with high probability, all these solutions could reach a reasonable generation gap between the expected 0-1 AUC loss metric $\mathbb{E}_\mathcal{D}(\sum_i\ell^{(i)}_{AUC})$ and the estimated surrogate loss on the training data $\mathcal{L}(W)$, with an order $\mathcal{O}(\sum_{i=1}^U\frac{1}{\sqrt{(n_i\chi_i(1-\chi_i))}})$.
	\end{rem}
	
	\section{Empirical Study}
	\subsection{Experiment Settings}
	For all the experiments, hyper-parameters are tuned based on the training and validation set(account for 85\% of the total instances), and the result on the test set are recorded. The experiments are done with 15 repetitions for each involved algorithm.     
	\subsection{Competitors}
	In this paper, we compare our model with the following competitors:
	\textbf{Robust Multi-Task Learning (RMTL)} \cite{RMTL}: RMTL aims at identifying irrelevant tasks when learning from multiple tasks. To this end, the model parameter is decomposed into a low-rank structure and group sparse structure. 
	\textbf{Robust Multi-Task Feature Learning (rMTFL)} \cite{rMTFL}: rMTFL assumes that the model W can be decomposed into two components: a shared feature structure $\boldsymbol{P}$($\ell_{1,2}$ norm penalty)  and a group-sparse structure $\boldsymbol{Q}$ ($\ell_{1,2}$ norm penalty on its transpose) that detects outliers. 
	\textbf{Lasso}: The the $\ell_1$-norm regularized multi-task least squares method. 
	\textbf{Joint Feature Learning (JFL)}\cite{JFL}: In JFL all the models are expected to share a common set of features. To this end, the group sparsity constraint is imposed on the models via the $\ell_{1,2}$ norm.
	\textbf{The Clustered Multi-Task Learning Method (CMTL)}: \cite{CMTL}: CMTL assumes that the tasks could be clustered into $k$ groups. Then a k-means based regularizer is adopted to leverage such a structure.
	\textbf{The task-feature co-clusters based multi-task method (COMT)} \cite{cocluster}:  COMT assumes that the task-specific components bear a feature-task coclustering structure.
	\textbf{Reduced Rank Multi-Stage multi-task learning (RAMU)} \cite{RAMUSA}: RAMU adopts a capped trace norm regularizer to minimize only the singular values smaller than an adaptively tuned threshold.  
	
	Note that since \cite{user1} adopts an extra data pool and \cite{user2} includes extra initialization algorithms based on \cite{user1}, our method is not compared with them for the sake of fairness.

	\begin{table}[htbp]
		\centering
		\caption{AUC Comparison on Simulation Dataset}
		\begin{tabular}{l|cccc}
			\toprule
			Alg & RMTL & rMTFL & LASSO & JFL \\
			\midrule
			mean & 83.48  & 83.45  & 83.57  & 83.49  \\
			\midrule
			Alg & CMTL & COMT & RAMU & Ours \\
			\midrule
			mean & 83.47  & 83.44  & 83.50  & {\textbf{99.65 }} \\
			\bottomrule
		\end{tabular}%
		\label{tab:addlabel}%
	\end{table}%
	
	\begin{table}[htbp]
		\centering
		\caption{Running Time Comparison (seconds): Original stands for the original AUC evaluation, wheres ours stands for our acceleration scheme.}
		\begin{tabular}{l|ccccc}
			\toprule
			ratio & 20\% & 40\% & 60\% & 80\% & 100\% \\
			\midrule
			Orginal & 18.57 & 74.22  & 151.86 & 268.55  & nan  \\
			Ours & {\textbf{3.06}} & {\textbf{5.50}} & {\textbf{8.65}} & {\textbf{12.46}} & {\textbf{15.82}} \\
			\bottomrule
		\end{tabular}%
		\label{tab:time}%
	\end{table}%
	
	\subsection{Simulated Dataset}
	\begin{table*}[htbp]
		\centering
		
		\caption{Performance Comparison based on the AUC metric }
		\begin{tabular}{l|ccccccccccccc}
			\toprule
			\multicolumn{1}{c|}{\multirow{3}[6]{*}{Alg}} & \multicolumn{13}{c}{Attibutes} \\
			\cmidrule{2-14}    \multicolumn{1}{c|}{} & \multicolumn{7}{c}{Shoes}               &     & \multicolumn{5}{c}{Sun} \\
			\cmidrule{2-8}\cmidrule{10-14}    \multicolumn{1}{c|}{} & BR  & CM  & FA  & FM  & OP  & ON  & PT  &     & CL  & MO  & OP  & RU  & SO \\
			\cmidrule{1-8}\cmidrule{10-14}    RMTL & 79.31  & 84.99  & 66.90  & 85.08  & 75.67  & 67.22  & 75.14  &     & 69.36  & 62.71  & 75.28  & 67.91  & 69.23  \\
			rMTFL & 70.90  & 83.78  & 67.27  & 85.91  & 73.71  & 65.21  & 77.11  &     & 69.27  & 62.15  & 75.80  & 68.16  & 68.76  \\
			LASSO & 68.46  & 80.48  & 65.90  & 84.01  & 71.47  & 64.60  & 75.08  &     & 67.64  & 61.83  & 75.39  & 68.57  & 69.13  \\
			JFL & 72.00  & 83.10  & 67.26  & 85.93  & 73.02  & 65.39  & 77.09  &     & 68.63  & 61.94  & 75.00  & 67.17  & 68.78  \\
			CMTL & 74.54  & 85.16  & 68.21  & 85.32  & 75.06  & 68.17  & 77.62  &     & 72.55  & 66.61  & 79.78  & 72.34  & 72.82  \\
			COMT & 84.24  & 88.68  & 69.66  & 89.19  & 80.93  & 72.99  & 80.62  &     & 70.69  & 63.72  & 76.93  & 69.43  & 70.44  \\
			RAMU & 78.33  & 84.58  & 65.78  & 84.68  & 75.25  & 66.72  & 73.50  &     & 72.95  & 69.25  & 79.81  & 74.39  & 72.50  \\
			Ours & {\textbf{92.95}} & {\textbf{90.92}} & {\textbf{73.24 }} & {\textbf{92.65}} & {\textbf{87.95}} & {\textbf{81.07}} & {\textbf{86.22}} &     & {\textbf{79.31}} & {\textbf{78.19}} & {\textbf{86.50}} & {\textbf{81.88}} & {\textbf{78.98}} \\
			\bottomrule
		\end{tabular}%
		\label{tab:perfall}%
	\end{table*}%

	\begin{figure}[ht]
		\centering
		\subfloat[Ground-Truth]{
			\includegraphics[width =0.45\columnwidth]{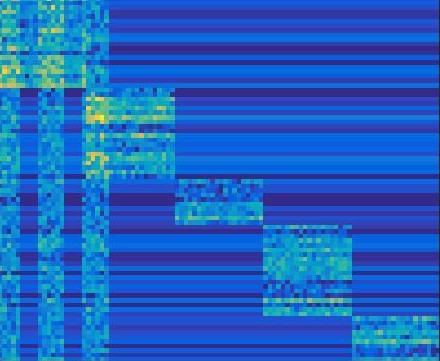}
		}
		\subfloat[ Learned Parameter ]{
			\includegraphics[width=0.45\columnwidth]{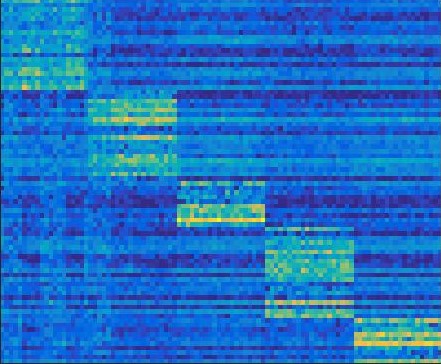}
		}
		\caption{The Potential of our proposed method to Recover the Expected Structure of the Parameters }
		\label{fig:struc}
	\end{figure}
	
	\begin{figure}[ht]
		\centering
		\subfloat[Loss Convergence]{
			\includegraphics[width =0.45\columnwidth]{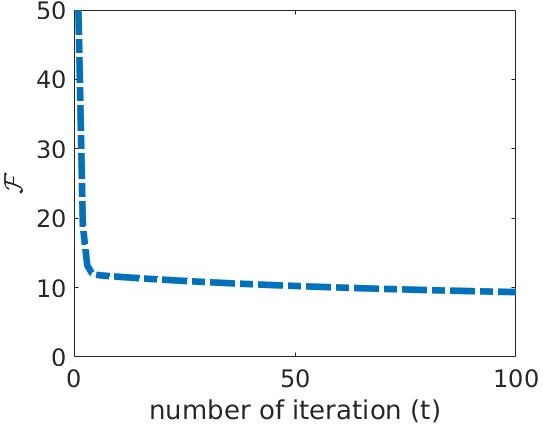}
		}
		\subfloat[Parameter Convergence ]{
			\includegraphics[width=0.45\columnwidth]{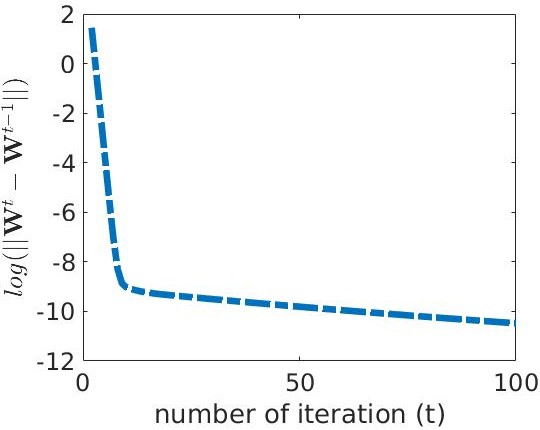}
		}
		\caption{The Convergence Behavior On Simulation Dataset: a)shows the loss convergence, whereas b) exhibits the convergence property in terms of the parameters. }
		\label{fig:iter}
	\end{figure}

	\begin{figure}[ht]
		\centering
		\subfloat[Shoes Dataset]{\includegraphics[width= 0.48\columnwidth]{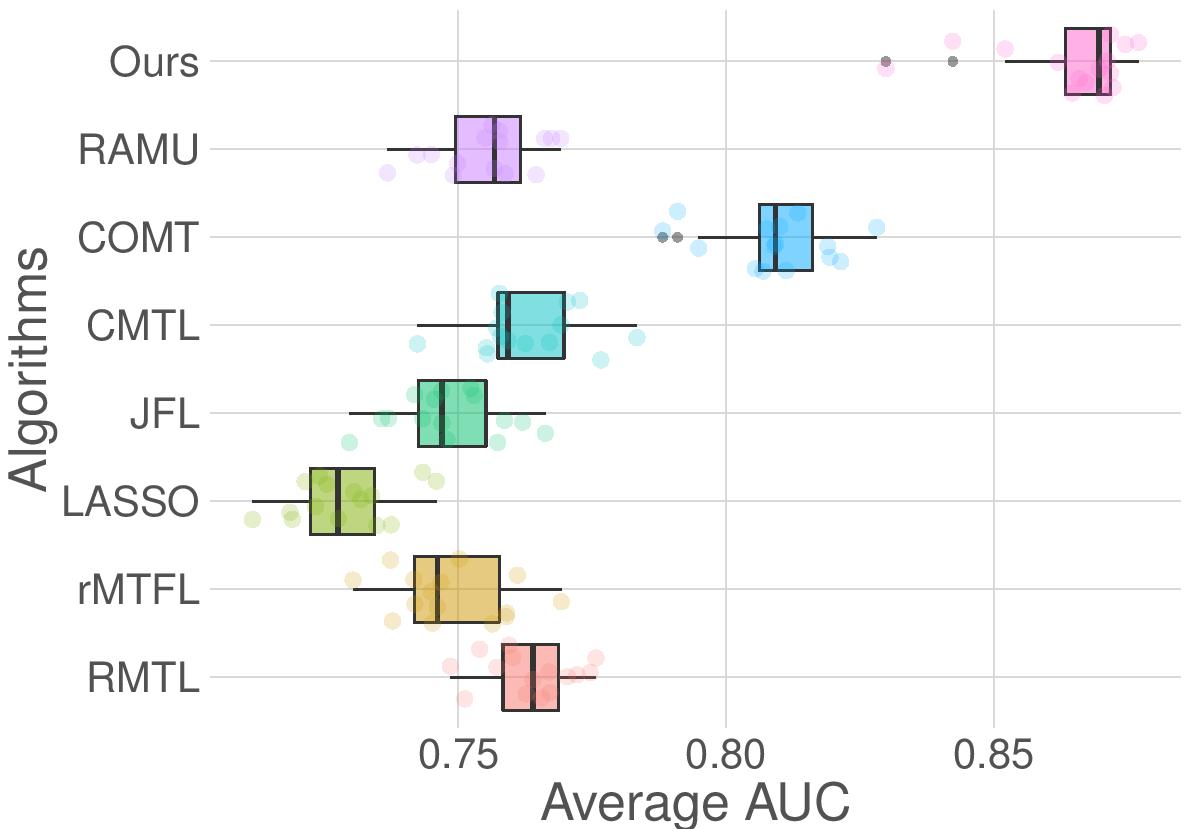}}
		\subfloat[Sun Attribute Dataset]{\includegraphics[width= 0.48\columnwidth]{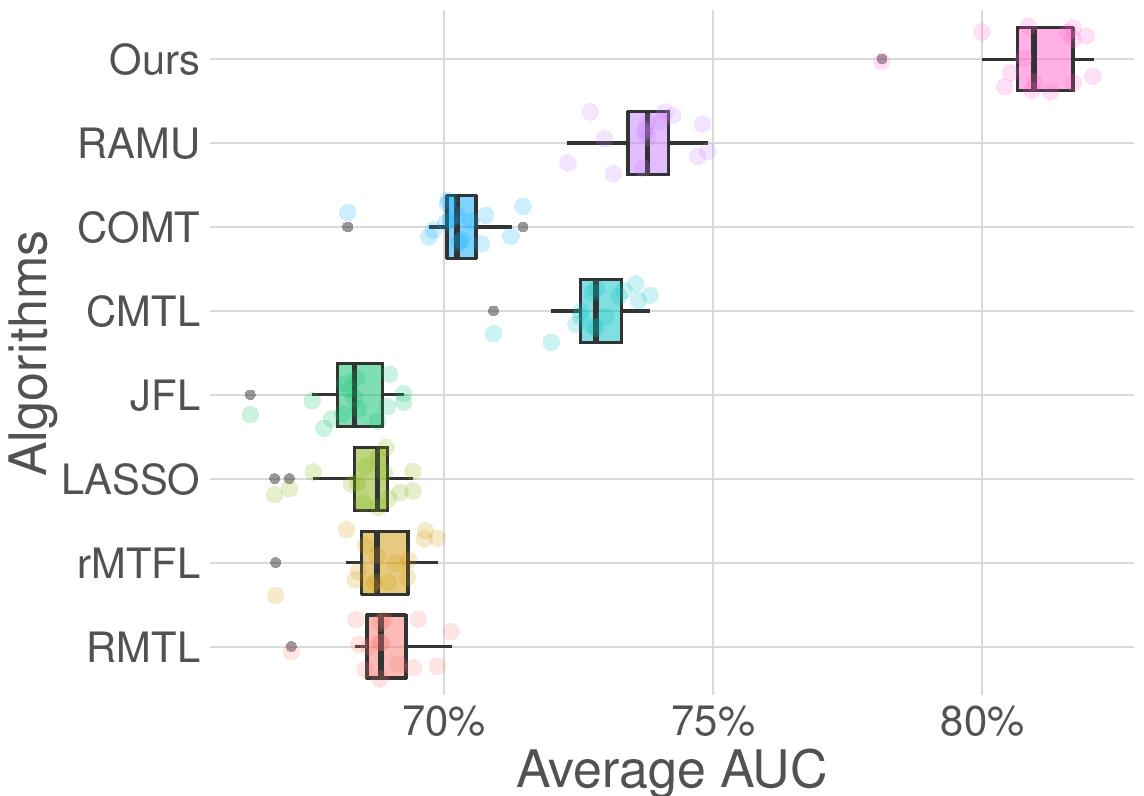}}		
		\caption{Average performances on all attributes of shoes dataset}
		\label{fig:box}
	\end{figure}
    In this subsection, we will generate a simulated annotation dataset with 100 simulated users, where the features and AUC scores are produced according to our proposed model. For each user, 5000 samples are generated as $\boldsymbol{X}^{(i)} \in \mathbb{R}^{5000\times80}$ and $\boldsymbol{x}^{(i)}_k \sim \mathcal{N}(0,\boldsymbol{I}_{80})$. This results in a volume of \textit{500,000 overall annotations}. To capture the global information, we set $\boldsymbol{\theta}$ as $\boldsymbol{\theta} \sim \mathcal{U}(0,5) + \mathcal{N}(0,0.5^2)$. In terms of the co-cluster nature, $\boldsymbol{G}$ is produced with a block-wise grouping structure for feature-user co-cluster.  Specifically, we create 5 blocks for $\boldsymbol{G}$, namely: $\boldsymbol{G}(1:20,1:20)$, $\boldsymbol{G}(21:40,21:40)$, $\boldsymbol{G}(41:50,41:60)$, $\boldsymbol{G}(51:70,61:80)$ and $\boldsymbol{G}(71:80,81:100)$. For each of the block, the elements are generated from the distribution $\mathcal{N}(C_i, 2.5^2) $ (generated via element-wise sampling) where $C_i \sim \mathcal{U}(0,10)$ is the centroid for the corresponding cluster and thus is shared among a specific cluster. For the elements that do not belong to the 5 chosen blocks are set as 0. For $\boldsymbol{P} \in \mathbb{R}^{d \times U}$, we set $\boldsymbol{P}(:,1:5)$, $\boldsymbol{P}(:,10:15)$, $\boldsymbol{P}(:,20:25)$ randomly with the distribution $\mathcal{U}(0,10)$, while the remaining entries are set as 0. For each user, the scoring function are generated as $\boldsymbol{s}^{(i)}= \boldsymbol{X}^{(i)}(\boldsymbol{\theta}+ \boldsymbol{G}^{(i)} + \boldsymbol{P}^{(i)}) +\epsilon^{(i)}$, where $\epsilon^{(i)} \in \mathbb{R}^{5000 
		\times 1}$, and $\epsilon^{(i)} \sim \mathcal{N}(0,0.01^2\boldsymbol{I}_{5000})$. To generate the labels $\boldsymbol{Y}^{(i)}$ for each $i$, the top 100 instances with highest scores are labeled as 1, while the remaining instances are labeled as -1. \\
	\indent The performance of all the involved algorithms on the simulated dataset is recorded in Table 1. The corresponding results show that our proposed algorithm consistently outperforms other competitors. Specifically, our algorithm reaches an AUC score of 99.65, where the second best algorithm only attains a score of 83.57.\\
	\indent Besides the generalized performance,  we could also verify empirically the ability of our algorithm to recover the expected structures on the parameters $\boldsymbol{W}$. With the same simulated dataset, we compare the  parameter $\boldsymbol{W}$ learned from our proposed and the Ground Truth parameters in Figure \ref{fig:struc}. The results show that our proposed methods could roughly recover the expected group-based structure.\\
	\indent In Theorem 2, we have proved the convergence behavior of the proposed algorithm. To verify theoretical findings, we plot the loss and parameter evolution against the number of iteration in Figure \ref{fig:iter}. In Figure \ref{fig:iter}-(a), we see that the loss function constantly decreases as the iteration proceeds, whereas in Figure \ref{fig:iter}-(b), it is easy to find that the parameter difference $log(\norm{\boldsymbol{W}^{t+1} - \boldsymbol{W}^{t}})$ also keeps decreasing. All these empirical observations coincide with our theoretical results.\\
	\indent To verify the efficiency of the proposed AUC evaluation scheme, we evaluate the running time of our algorithm with and without the AUC evaluation scheme. The resulting comparison is recorded in Table \ref{tab:time} when different ratios of the dataset are adopted as the training set. As what exhibited here, the original algorithm without the efficient AUC evaluation scheme gets slowly sharply when the training sample increases. When 100\% samples are included in the training set, our server couldn't finish the program within 1h due to the memory limit (24GB). We denote as nan correspondingly. In contrast, we can see an up to 20 times speed-up with the help
	of the proposed scheme in proposition 1. 
	\subsection{Shoes Dataset}
	The Shoes Dataset is collected from \cite{user2} which contains 14,658 online shopping images. In this dataset, 7 attributes are annotated by users with a wide spectrum of interests and backgrounds. For each attribute, there are at least 190 users who take part in the annotation, and each user is assigned with 50 images. Overall, 90,000 annotations are collected in this dataset. We concatenate the GIST and color histograms provided by the original dataset as the features. To remove the redundant input features,  Principal  Component Analysis (PCA) is performed before training, and only the components that are capable of interpreting the first 99\% of the total data variance are preserved. Meanwhile, we also need to eliminate the effect of the users who extremely prefer to provide merely the positive (negative) labels. To this end, we remove the users who give less than 8 annotations for at least one of the classes.\\
	\indent The left half of Table \ref{tab:perfall} shows the average performance of the 15 repetitions with the experimental setting (BR: Brown, CM: Comfortable, FA: Fashionable, FM: Formal, OP: Open, ON: Ornate, PT: Pointy). Furthermore, in Figure \ref{fig:box} ,we visualize the average result over the 7 attributes for 15 repetitions with a boxplot. Accordingly, we could reach the  conclusion that our proposed algorithm consistently outperforms all the benchmark algorithms by a significant margin.
	\subsection{Sun Attribute Dataset}
	The SUN Attributes Dataset \cite{sundata}, is a well-known large-scale scene attribute dataset with roughly 1,4000 images and  a taxonomy of 102 discriminative attributes. Recently, in \cite{user2}, the personalized annotations over five attributes are collected with hundreds of annotators. For each person, 50 images are labeled based on their own comprehension and preference. Overall, this dataset contains 64,900 annotations collected from different users. As for dataset preprocessing, we adopt almost the same procedure as the shoes dataset. The difference here is that we use the second last fc layer of the Inception-V3 \cite{inception} network as the input feature. Furthermore, the PCA~is done for each attribute preserving 90\% of the total data variance.\\
	\indent  The right half of Table \ref{tab:perfall} shows the average performance over 15 repetitions (CL: Cluttered, MO: Modern, OP: Opening Area, RU: Rustic, SO: Soothe) , and Figure \ref{fig:box}-(b) shows the average AUC scores over 5 attributes for the 15 repetitions. Similar to the shoes dataset, we see that our proposed algorithm consistently outperforms all the benchmark algorithms.
	\section{Conclusion}
	In this paper, we propose a novel multi-task model for learning user-specific attribute comprehension with a hierarchical decomposition to model the consensus-to-personalization evolution and an AUC-based loss function to learn the preference. Furthermore, we propose an efficient AUC-based evaluation method to significantly reduce the computational complexity of computing the loss and the gradients. Eventually, both the theoretical results and the experimental results demonstrate the effectiveness of our proposed model.
	\section{Acknowledgment}
	This work was supported by the National Key R\&D Program of China (Grant No. 2016YFB0800603). The research of Zhiyong Yang and Qingming Huang was supported in part by National Natural Science Foundation of China:  61332016, 61650202 and 61620106009, in part by Key Research Program of Frontier Sciences, CAS: QYZDJ-SSW-SYS013. The research of Qianqian Xu was supported in part by National Natural
	Science Foundation of China (No.61672514, 61390514, 61572042),
	Beijing Natural Science Foundation (4182079), Youth Innovation
	Promotion Association CAS, and CCF-Tencent Open Research Fund.
	The research of Xiaochun Cao was supported by National Natural Science Foundation of China (No. U1636214, 61733007, U1605252), Key Program of the Chinese Academy of Sciences (No. QYZDB-SSW-JSC003).


	\bibliographystyle{named}
	\bibliography{aaai19_zhiyong_full}
	\newpage
	
\onecolumn
\section{\LARGE Supplementary Materials } 

 \section{Efficient AUC Evaluations}
 \subsection{Proof of Proposition 1}
 According to the definition of $\mathcal{G}_i$ and $\varepsilon_i$, the affinity matrix of $\mathcal{G}_i$ could be written as 
 \begin{equation*}
 \mathcal{W}_i = \frac{1}{n_+n_-}[\boldsymbol{\tilde{y}^{(i)}}(\boldsymbol{1}-\boldsymbol{\tilde{y}^{(i)}})^\top + (1-\boldsymbol{\tilde{y}^{(i)}})(\boldsymbol{\tilde{y}^{(i)}})^\top ]. 
 \end{equation*}
 Correspondingly, $\boldsymbol{L}^{(i)}$ could be simplified as :
 \begin{equation}\label{key}
 \boldsymbol{L}^{(i)} = diag\left(\frac{\boldsymbol{\tilde{y}^{(i)}}}{n_{+,i}}+ \frac{\boldsymbol{1} - \boldsymbol{\tilde{y}^{(i)}}}{n_{-,i}}\right) -\mathcal{W}_i
\end{equation}
Then for $\boldsymbol{A}^\top\boldsymbol{L}^{(i)}\boldsymbol{B}$, we have:
\begin{equation}\label{key}
\boldsymbol{A}^\top\boldsymbol{L}^{(i)}\boldsymbol{B} = \boldsymbol{A}^\top\Bigg(diag\left(\frac{\boldsymbol{\tilde{y}^{(i)}}}{n_{+,i}}+ \frac{\boldsymbol{1} -
	 \boldsymbol{\tilde{y}^{(i)}}}{n_{-,i}}\right)\Bigg)\boldsymbol{B} - \boldsymbol{\tilde{A}}_+\boldsymbol{\tilde{B}}_-^\top - \boldsymbol{\tilde{A}}_-\boldsymbol{\tilde{B}}_+^\top
\end{equation}
where 
\begin{equation}
\boldsymbol{\tilde{A}}_+ =\frac{1}{n_{+,i}} \boldsymbol{A}^\top\boldsymbol{\tilde{y}}^{(i)},\  \ \boldsymbol{\tilde{A}}_- =\frac{1}{n_{-,i}} \boldsymbol{A}^\top(\boldsymbol{1} -\boldsymbol{\tilde{y}}^{(i)}),  \ \boldsymbol{\tilde{B}}_+ =\frac{1}{n_{+,i}} \boldsymbol{B}^\top\boldsymbol{\tilde{y}}^{(i)}, \ \boldsymbol{\tilde{B}}_- =\frac{1}{n_{-,i}} \boldsymbol{B}^\top(\boldsymbol{1} -\boldsymbol{\tilde{y}}^{(i)})
\end{equation}
It is easy to see that if we first cache $\boldsymbol{\tilde{A}}_+, \ \boldsymbol{\tilde{A}}_-$ and $\boldsymbol{\tilde{B}}_+, \boldsymbol{\tilde{B}}_-$, then $\boldsymbol{A}^\top\boldsymbol{L}^{(i)}\boldsymbol{B}$ could be computed within $\mathcal{O}(abn_i + (a+b)n_i)$, which should cost at most $\mathcal{O}(abn_i^2)$ with a naive method.
Similarly, for $\boldsymbol{A}^\top\boldsymbol{L}$, we have the following simplification:
\begin{equation}\label{key}
\boldsymbol{A}^\top\boldsymbol{L} = \boldsymbol{A}^\top\Bigg(diag\left(\frac{\boldsymbol{\tilde{y}^{(i)}}}{n_{+,i}}+ \frac{\boldsymbol{1} - \boldsymbol{\tilde{y}^{(i)}}}{n_{-,i}}\right)\Bigg) - \boldsymbol{\tilde{A}}_+\Bigg(\frac{\boldsymbol{\boldsymbol{1}-\tilde{y}^{(i)}}}{n_{-,i}}^\top\Bigg) - \boldsymbol{\tilde{A}}_-\Bigg(\frac{\boldsymbol{\tilde{y}^{(i)}}}{n_{+,i}}^\top\Bigg)
\end{equation}
This shows that the complexity for computing $\boldsymbol{A}^\top\boldsymbol{L}^{(i)}$ could be reduced from at most $\mathcal{O}(n_i^2a)$ to $\mathcal{O}(n_ia)$
 \qed
 \subsection{Efficient method to evaluation the AUC loss and gradients}
\subsubsection{Evaluation of The Loss}  For a specific user $i$, we denote $\boldsymbol{\tau}_i = (\boldsymbol{\tilde{y}}^{(i)} - \boldsymbol{X}^{(i)}\boldsymbol{W}^{(i)} ), \in \mathbb{R}^{n_i}$, then the empirical loss for AUC could be represented as : $\ell_i  = \boldsymbol{\tau}_i^\top\boldsymbol{L}^{(i)} \boldsymbol{\tau}_i$.
Letting $\boldsymbol{A} = \boldsymbol{B} = \boldsymbol{\tau}_i$, we could reduce the time complexity of calculating $\ell_i$ from $\mathcal{O}(n_{+,i}n_{-,i})$ to $\mathcal{O}(n_{+,i}+n_{-,i})$     using the details in the proof of Proposition 1.
\subsubsection{Evaluation of The Gradients}
Denote $\Delta_{i}$ as :
\begin{equation}
\begin{split}
\Delta_{i} = \boldsymbol{X}^{(i)\top}\boldsymbol{L}^{(i)}\boldsymbol{X}^{(i)}(\boldsymbol{\theta}+\boldsymbol{G}^{(i)}  + \boldsymbol{P}^{(i)} )- \boldsymbol{X}^{(i)\top}\boldsymbol{L}^{(i)}\boldsymbol{y}^{(i)}
\end{split}
\end{equation}
We could reach :
\begin{align*}
&\nabla_{\boldsymbol{\theta}}\mathcal{L}(\boldsymbol{W}) =
\sum\limits_{i=1}^{U}\Delta_{i}, \ \ 
\nabla_{\boldsymbol{P}^{(i)}}\mathcal{L}(\boldsymbol{W}) =\nabla_{\boldsymbol{G}^{(i)}}\mathcal{L}(\boldsymbol{W})
=\Delta_{i}.
\end{align*}
Obviously, $\boldsymbol{X}^{(i)\top}\boldsymbol{L}^{(i)}\boldsymbol{y}^{(i)}$ could be finished within $\mathcal{O}(dn_i)$ with Proposition 1. For $\boldsymbol{X}^{(i)\top}\boldsymbol{L}^{(i)}\boldsymbol{X}^{(i)}(\boldsymbol{\theta}+\boldsymbol{G}^{(i)}  + \boldsymbol{P}^{(i)} )$, if  computation is finished in the following order:
\begin{equation}
\big(\boldsymbol{X}^{(i)\top}\boldsymbol{L}^{(i)}\big)\cdot\big(\boldsymbol{X}^{(i)}(\boldsymbol{\theta}+\boldsymbol{G}^{(i)}  + \boldsymbol{P}^{(i)} )\big).
\end{equation} 
We could still finish the computation of this term with a complexity $\mathcal{O}(dn_i)$. Above all, in light of Proposition 1, the complexity for  computing the gradients could be efficiently reduced from $\mathcal{O}(d\sum_i n_{+,i} \cdot n_{-,i})$ to $\mathcal{O}(d\sum_i (n_{+,i} + n_{-,i}))$.
\subsection{Proof of Theorem 1}

We further denote 
\begin{equation*}
dL_{i} = \boldsymbol{X}^{(i)\top}\boldsymbol{L}^{(i)}\boldsymbol{X}^{(i)}(\boldsymbol{W}^{(i)} - \boldsymbol{W'}^{(i)}),\
d\boldsymbol{\theta} = \nabla_{\boldsymbol{\theta}}\mathcal{L}(\boldsymbol{W}) -\nabla_{\boldsymbol{\theta'}}\mathcal{L}(\boldsymbol{W'}),\ d\boldsymbol{P}^{(i)} = \nabla_{\boldsymbol{P}^{(i)}}\mathcal{L}(\boldsymbol{W}) -\nabla_{\boldsymbol{P'}^{(i)}}\mathcal{L}(\boldsymbol{W'}).
\end{equation*}
Note that:
\begin{equation*}
\boldsymbol{W}^{(i)} = \boldsymbol{\theta}+\boldsymbol{G}^{(i)}  + \boldsymbol{P}^{(i)}, \
\boldsymbol{W}'^{(i)} = \boldsymbol{\theta'}+\boldsymbol{G'}^{(i)}  + \boldsymbol{P'}^{(i)}
\end{equation*}
It thus follows that 
\begin{equation}\centering
\begin{split}
&~~~\norm{\nabla{\mathcal{L}(\boldsymbol{W})} - \nabla{\mathcal{L}(\boldsymbol{W}')}}\\
=&~~\left(\norm{d\boldsymbol{\theta}}^2+ \sum\limits_{i}\norm{d\boldsymbol{G}^{(i)}}^2 + \sum\limits_{i}\norm{d\boldsymbol{P}^{(i)}}^2 \right)^{1/2}\\ {\leq}&~~\norm{d\boldsymbol{\theta}}+\sum\limits_{i}\norm{d\boldsymbol{G}^{(i)}} + \sum\limits_{i}\norm{d\boldsymbol{P}^{(i)}} \\
 =&~~ \norm{\sum\limits_{i=1}^{U}dL_{i}} + 2\sum\limits_{i=1}^{U}
\norm{dL_{i}}
\\\leq&~ ~3C\sum\limits_{i=1}^{U}\norm{\boldsymbol{W}^{(i)}- \boldsymbol{W}'^{(i)}}\\ \leq&~~ 3C  \Big(U\norm{\boldsymbol{\theta} -\boldsymbol{\theta'}} +\sum_{i=1}^{U}\norm{\boldsymbol{G}^{(i)} -\boldsymbol{G'}^{(i)}}+\sum_{i=1}^{U}\norm{\boldsymbol{P}^{(i)} -\boldsymbol{P'}^{(i)}}  \Big)\\ {\leq}& ~~
3CU\sqrt{2U+1}\norm{vec(\boldsymbol{W}) - vec(\boldsymbol{W}') }\\ \leq & ~ ~ \gamma \norm{vec(\boldsymbol{W}) - vec(\boldsymbol{W}') }
\end{split}
\end{equation}
where
\begin{equation*}
C = \max\limits_{i}\left(\norm{\boldsymbol{X}^{(i)^\top}\boldsymbol{L}^{(i)}\boldsymbol{X}^{(i)}}_2\right).
\end{equation*} 
To prove the last  inequality, we have :
\begin{equation}\label{finieq1}
\forall i, \norm{\boldsymbol{X}^{(i)^\top}\boldsymbol{L}^{(i)}\boldsymbol{X}^{(i)}}_2 \le \norm{\boldsymbol{X}^{(i)}}^2_2\norm{\boldsymbol{L}^{(i)}}_2.
\end{equation} 
and according to Theorem 3.3 of \cite{graph}, we have   
\begin{equation}\label{finieq2}
\norm{\boldsymbol{L}^{(i)}}_2=\frac{n_i}{n_{+,i}n_{-,i}}
\end{equation} 
Combining (\ref{finieq1}) and (\ref{finieq2}), the last inequality holds, thus we complete the proof.
 \section{Convergence Analysis}
  \subsection{Proof of Proposition 2}
For the sake of simplicity, we denote $\mathcal{T}_{\kappa,\frac{\lambda_3}{\rho_k}}(\boldsymbol{\Sigma})$ here in this proof as $\mathcal{T}$, $T_{ii}$ as the $i$-th diagonal entry of $\mathcal{T}$ i.e. $(\frac{1}{2c+1})^{I[i>\kappa]}\sigma_i$, and $m$ as $min\{d,U\}$. \\
According to the result of  General Singular Value Thresholding(GSVT)  \cite{gsvt}, $G^*$ is an optimal solution of (\ref{Pg}), if $\mathcal{T}_{11}, \cdots \mathcal{T}_{mm}  $  is a minimizer of the following problem: 
 \begin{equation}\label{pro:cons}
  \argmin\limits_{\iota_1 \ge \iota_{2}, \cdots, \ge \iota_{m}\ge 0} \frac{1}{2}\sum\limits_{i}(\iota_i-\sigma_i(\boldsymbol{\tilde{G}}^k) )^2 +\frac{\lambda_3}{\rho^2_k}\sum\limits_{i=\kappa+1}^{min\{d,U\}}\iota^2_i
 \end{equation}
 It is easy to see that $\mathcal{T}_{11}, \cdots \mathcal{T}_{mm}  $ is the unique minimizer of the unconstrained problem
  \begin{equation}\label{key}
 \argmin \frac{1}{2}\sum\limits_{i}(\iota_i-\sigma_i(\boldsymbol{\tilde{G}}^k) )^2 +\frac{\lambda_3}{\rho^2_k}\sum\limits_{i=\kappa+1}^{min\{d,U\}}\iota^2_i,
 \end{equation}
 due to its strong convexity.\\
  It only remains to check that $\mathcal{T}_{11}, \cdots \mathcal{T}_{mm}  $ is also a feasible solution to the constrained problem. Given the group number $\kappa$, for any $ 1 \le i\le min\{d,U\}, 1 \le j \le min\{d,U\} $, and $i\neq j$, we have :
  \begin{equation*}
  (\mathcal{T}_{ii} - \mathcal{T}_{jj})(\sigma_i - \sigma_j) = \Bigg(\left({2\frac{\lambda_3}{\rho_k}+1}\right)^{-I[i>\kappa]}\sigma_i -  \left({2\frac{\lambda_3}{\rho_k}+1}\right)^{-I[j>\kappa]}\sigma_j \Bigg)\Bigg(\sigma_i - \sigma_j\Bigg)
  \end{equation*}
  We have
 \begin{itemize}
\item  If $i < \kappa$ and $j < \kappa$ : ~$(\mathcal{T}_{ii} - \mathcal{T}_{jj})(\sigma_i - \sigma_j) =(\sigma_i - \sigma_j)^2 \ge 0 $
\item If one of $i, j$ is greater than $\kappa$  :~$(\mathcal{T}_{ii} - \mathcal{T}_{jj})(\sigma_i - \sigma_j) > (\sigma_i - \sigma_j)^2 \ge 0 $.
\item If both $i$ and $j$ are greater than $\kappa$ : ~$(\mathcal{T}_{ii} - \mathcal{T}_{jj})(\sigma_i - \sigma_j) = (2\frac{\lambda_2}{\rho_k}+1)^{-1} (\sigma_i - \sigma_j)^2 \ge 0 $
 \end{itemize}
Thus $(\mathcal{T}_{ii} - \mathcal{T}_{jj})(\sigma_i - \sigma_j) \ge 0$. This implies that the diagonal elements of $\mathcal{T}$ preserve the order of the singular values of $\boldsymbol{\tilde{G}}^k$. This proves that $\mathcal{T}$ is also a feasible solution of (\ref{pro:cons}). Finally, we have $\mathcal{T}$ is the minimizer of (\ref{pro:cons}) and complete the proof.
 \qed
\subsection{Line Search Strategy}
As we mentioned in the main paper, the line search strategy is a crucial component  which guarantees the effectiveness of (\ref{upper bound}) via tuning the parameter $\rho_k$. Now we provide a more detailed review of this strategy mentioned in the main paper.
\begin{algorithm}[ht]           
	\caption{ $LineSearch\left(\rho_0, \alpha,  \boldsymbol{\tilde{\theta}}^k, \boldsymbol{\tilde{G}}^k, \boldsymbol{\tilde{P}}^k\right)$ }
	\KwIn{$\rho_0$, $\alpha>1$, $\boldsymbol{\tilde{\theta}}^k, \boldsymbol{\tilde{G}}^k, \boldsymbol{\tilde{P}}^k$}
	
	\KwOut{$\rho$}

	Initialize $\rho = \rho_0$\;
	\While{True}{
		Solve  $(\boldsymbol{\theta}^k,\boldsymbol{G}^k,\boldsymbol{P}^k)$ with 
		$\rho_k= \rho $ and ($\boldsymbol{\tilde{\theta}}^k, \boldsymbol{\tilde{G}}^k, \boldsymbol{\tilde{P}}^k$)\;
		\uIf{$\lk{k} <
			\mathcal{L}(\boldsymbol{\tilde{\theta}}^k, \boldsymbol{\tilde{G}}^k, \boldsymbol{\tilde{P}}^k) + \Psi_{\rho_k}(D\boldsymbol{\theta})
			+  \Psi_{\rho_k}(D\boldsymbol{G}) +  \Psi_{\rho_k}(D\boldsymbol{P})$}{break\;
			}\Else{$\rho= \rho \cdot \alpha$} 
		
	}
\end{algorithm} 
 \subsection{A Summary of the Optimization Method}
With the aforementioned closed-form solution for $\boldsymbol{G}$ and line search strategy to find a proper $ \rho_k $. For the sake of clarity,  we now summarize the details of the proximal gradient method in Algorithm 2.
\begin{algorithm}[ht]           
	\caption{The  proximal gradient method }
	\KwIn{$\mathcal{D}$, $\lambda_1$,$\lambda_2$,$\lambda_3$,$\rho_0>0$ ,$\alpha >1$}
	
	\KwOut{$\boldsymbol{\theta}^{k-1}$,$\boldsymbol{G}^{k-1}$ ,$\boldsymbol{P}^{k-1}$}

	Initialize $\boldsymbol{\theta}^0$,$\boldsymbol{G}^0$, $\boldsymbol{P}^0$, \;
	$\boldsymbol{\theta}^{ref} :=\boldsymbol{\theta}^{0},\boldsymbol{G}^{ref} :=\boldsymbol{G}^{0},\boldsymbol{P}^{ref} :=\boldsymbol{P}^{0} $\;
	\While{Not Converged}{
		Solve  $(\boldsymbol{\theta}^k,\boldsymbol{G}^k,\boldsymbol{P}^k)$ from Eq.(\ref{Ptheta})-Eq.(\ref{Pp})\;
		Set $\rho_k:= LineSearch(\rho_{k-1},\alpha,\boldsymbol{\tilde{\theta}}^k, \boldsymbol{\tilde{G}}^k, \boldsymbol{\tilde{P}}^k) $ \;
		Update$(\boldsymbol{\theta}^k,\boldsymbol{G}^k,\boldsymbol{P}^k)$ again with $\rho_k$ \; 
		$\boldsymbol{\theta}^{ref}:= \boldsymbol{\theta}^{k} $; \ 	$\boldsymbol{G}^{ref}:= \boldsymbol{G}^{k} $; \
		$\boldsymbol{P}^{ref}:=
		\boldsymbol{P}^k$\;
		$k = k+1$\;
	}
\end{algorithm} 
 \subsection{Definition of the sub-differential}
 Now we introduce the generalized sub-differential for proper and lower semi-continues functions, and the readers are referred to \cite{var} for a more detailed analysis. In this section, we denote $\mathcal{L}(\boldsymbol{W})$ as $\mathcal{L}(\boldsymbol{\theta}, \boldsymbol{G}, \boldsymbol{P})$ for clarity.
 \begin{defi}
 Consider a function $f: \mathbb{R}^n \rightarrow \bar{\mathbb{R}}$ and a point $\bar{x}$ with $f(\bar{x})$ finite. For a vector $v \in \mathbb{R}^n$, we have:
 \begin{itemize}
 	\item a) $v$ is a \textit{\textbf{regular subgradient}} of $f$ at $\bar{x}$, written $v \in {\hat{\partial} f(\bar{x})}$, if 
 	\begin{equation}\label{key}
 	f(x) \ge f(\bar{x}) + \left<v, x-\bar{x}\right> + o(|x - \bar{x}|);
 	\end{equation}
 	\item b) v is a \textit{\textbf{(general) subgradient}} of $f$ at $\bar{x}$, written as $v \in \partial f(\bar{x})$, if there are sequences $x^{\nu} \rightarrow \bar{x}$, with $f(x^\nu) \rightarrow f(\bar{x})$, and
 	$v \in \hat{\partial} f(x^\nu)$ with $v^\nu \rightarrow v$.
 \end{itemize}
 \end{defi}
Note that notation $o$ in a), stands for a short-hand for an one-sided limit condition.
\begin{equation}\label{key}
\liminf_{x\rightarrow \bar{x}, x \neq \bar{x}}\dfrac{f(x) -f(\bar{x}) - \left<v,x-\bar{x}\right>}{|x - \bar{x}|} \ge 0
\end{equation}
Specifically, the generalized subgradient  has the following properties.
\begin{proper}[existence of the subgradient](Corollary 8.10 of \cite{var})
If the function $f: \mathbb{R}^n \rightarrow \bar{\mathbb{R}}$ is finite and lower semi-continuous at $\bar{x}$, then we have $\partial f(\bar{x})$ is not empty.
\end{proper}
\noindent With Property 1, we see that generalized subgradient always exists for finite and lower semi-continuous functions.
\begin{proper}[Generalized Fermat rule for local minimums](Theorem 10.1 of \cite{var})
	If a proper function $f: \mathbb{R}^n \rightarrow \bar{\mathbb{R}}$ has a local minimum at $\bar{x}$, then we have $0 \in \partial f(\bar{x})$.
\end{proper}
\noindent With Property 2, we define $\bar{x}$ as a \textbf{\textit{critical point}} of $f$, if $0 \in \partial{f}$.

 \subsection{The continuity of $\mathcal{R}_2(\boldsymbol{G})$}
 Property 2 of the generalized subdifferential shows that one could always find nonempty subdifferential for lower semi-continuous functions. For our objective function $\mathcal{F}(\cdot)$, it is easy to find that $\mathcal{L}$, $\mathcal{R}_1(\cdot)$, and $\mathcal{R}_3(\cdot)$ are continuous and thus must be lower semi-continuous. Now in this section, we provide a guarantee for nonempty generalized subdifferentials of $\mathcal{F}(\cdots)$ via proving Lemma 1,  i.e., the continuity (and thus lower semi-continuity) of $\mathcal{R}_2$. Note that $\mathcal{R}_2$ always depends on the predefined group number. In the following discussions, we will keep it fixed as $\kappa$. \\

\subsubsection{Proof of Lemma 1}
For the sake of simplicity, we just set $m = \min\{d,U\}$ throughout this proof. It is easy to find that $\mathcal{R}_2(\boldsymbol{G})= \sum_{i=\kappa+1}^{m}\sigma^2_i(\boldsymbol{G})$ reformulated as a composite $f \circ \sigma$. Here,  $\sigma= (\sigma_1(\boldsymbol{G}), \cdots, \sigma_{m}(\boldsymbol{G}))$ are the singular values of $\boldsymbol{G}$. $f: \mathbb{R}^{m }\rightarrow \bar{\mathbb{R}}$ is  the sum-of-squares of the last $m-\kappa$ elements with smallest absolute value.  In other words, denote a rearrangement of $x$ as $x^\downarrow$ with $|x^\downarrow_1| \ge |x^\downarrow_2|, \cdots \ge |x^\downarrow_{m}|$, then we have $f(x) = \sum_{i=\kappa+1}^{m} (x^\downarrow_i)^2$. Clearly, $\mathcal{R}_2(\cdot)$ is continuous if and only if $f(\cdot)$ is continuous. We only need to prove that $f(\cdot)$ is continuous.\\
\noindent For any given point $x_0 \in \mathbb{R}^m$, define $\delta_1, \delta_2$ as\footnote{The choice $\delta_1$ is to avoid the change of sign in the neighbor, whereas the choice of $\delta_2$  is to avoid the change of the order with respect to absolute values. } :
	\begin{align*}
	\delta_1 =\begin{cases} \min\limits_{
		m \ge i \ge 1, ~
		|x_{0,i}^\downarrow| > 0 }(|x_{0,i}^\downarrow|) &{if }~ \exists~ i,|x_{0,i}^\downarrow|  > 0 \\ 
	+\infty & otherwise \end{cases}, ~~
		\delta_2 = \begin{cases} \min\limits_{
			m \ge i > 1, ~
			|x_{0,i}^\downarrow| <|x_{0,i-1}^\downarrow|}(|x_{0,i-1}^\downarrow| - |x_{0,i}^\downarrow|) &{if }~ \exists~ i, i-1, |x_{0,i-1}^\downarrow| > |x_{0,i}^\downarrow|  \\ 
		+\infty & otherwise \end{cases}
	\end{align*}
Furthermore, define $M$ as :
\begin{align*}
	M = \begin{cases}
		1, & if ~~ \delta_1 = +\infty ~~~  and~~ \delta_2 =　+\infty \\
		\min\{\delta_1,\frac{\delta_2}{2}\} & otherwise
	\end{cases}
\end{align*}	
 Define an open ball \[\mathcal{B}(x_0,\delta) = \{x:  0<\norm{x - x_0} < \delta\}.\] With the choice of   $ 0<\delta < M$, we have $\forall x \in \mathcal{B}(x_0,\delta), m \ge i > \kappa$ :
 \[\left| |x_i^{\downarrow}|^2 - |x_{0,i}^{\downarrow}|^2\right| \le (|x_{0,i}^{\downarrow}|+\delta)^2 - (|x_{0,i}^{\downarrow}|)^2 \le (2|x_{0,i}^\downarrow|+\delta) \delta \]
 Then we have $\forall x \in \mathcal{B}(x_0,\delta)$ :
 \[|f(x) -f(x_0)| \le \sum_{i=\kappa+1}^m\left||x_i^{\downarrow}|^2 - |x_{0,i}^{\downarrow}|^2\right| \le \delta \sum_{i=\kappa+1}^m(2|x_{0,i}^\downarrow|+\delta) \le \delta(2\norm{x_0}_1 +mM) .  \]
 If $\epsilon > \delta(2\norm{x_0}_1 +mM) $, we have  $\delta < \frac{\epsilon}{ 2\norm{x_0}_1 +mM}.$ We then have $\forall \epsilon >0$, with the choice\footnote{actually it works for all $\delta < \min\{M,  \frac{\epsilon}{2\norm{x_0}_1 +mM}\} $} $\delta = \frac{1}{2}\min\{M,  \frac{\epsilon}{2\norm{x_0}_1 +mM}\}$, we have \[\forall x \in \mathcal{B}(x_0, \delta) , |f(x) - f(x_0)| < \epsilon.\] This immediately implies the continuity of $f(\cdot)$ at $x_0$.
\qed
 \subsection{Proof of Theorem 2}
 \textbf{proof of 1)}: Since ${\mathcal{L}}(\cdot)$ is Lipschitz continuous, for every iteration $k+1$, $\exists~$ a smallest number $\eta_{k+1}, and~ \gamma \ge \eta_{k+1} \ge0$, such that $\forall C \ge  \eta_{k+1} $ :
\begin{equation}\label{lp}
\begin{split}
\lk{k+1} \le& \lk{k} + \left<\nabla_{\boldsymbol{\theta}}\mathcal{L}, \Delta(\boldsymbol{\theta}^k)\right> +  \left<\nabla_{G}\mathcal{L}, \Delta(\boldsymbol{G}^k)\right> +  \left<\nabla_{P}\mathcal{L}, \Delta(\boldsymbol{P}^k)\right> \\
&+\frac{C}{2}\norm{\Delta(\boldsymbol{\theta}^k)}^2_2 + \frac{C}{2}\norm{\Delta(\boldsymbol{G}^k)}^2_F + \frac{C}{2}\norm{\Delta(\boldsymbol{P}^k)}^2_F
\end{split}
\end{equation}
The line search strategy guarantees that $\eta_{k+1} < \rho_{k+1}$ (via the $<$ in the $if$ condition of Algorithm 1).\\
For $k+1$-th iteration, the $\boldsymbol{\theta}$ subproblem:
\begin{equation}
\argmin_{\boldsymbol{\theta}} \dfrac{1}{2} \left\norm{\boldsymbol{\theta} -\tilde{\boldsymbol{\theta}^k} \right}_2^2 + \frac{\lambda_1}{\rho_k} \norm{\boldsymbol{\theta}}^2_2 \Leftrightarrow \argmin_{\boldsymbol{\theta}} \left<\nabla_{\boldsymbol{\theta}}\mathcal{L}, \boldsymbol{\theta}-\boldsymbol{\theta}^k)\right> + \frac{\rho_{k+1}}{2}\norm{\boldsymbol{\theta} -\boldsymbol{\theta}^k}^2_2 + \lambda_1 \norm{\boldsymbol{\theta}}^2_2
\end{equation}
is strongly convex. This implies that the solution $\boldsymbol{\theta}^{k+1}$ is the minimizer of this subproblem. In this sense, we have:
\begin{equation}\label{eq:con1}
\rth \le  \lambda_1 \norm{\boldsymbol{\theta}^{k}}^2_2
\end{equation}
Similarly, we have the $\boldsymbol{P}$ subproblem is strongly-convex, and :
\begin{equation}\label{eq:con2}
\rp \le \lambda_3\norm{\boldsymbol{P}^{k}}_{1,2}
\end{equation}
In Proposition 2, we proved the optimality of the $\boldsymbol{G}$ problem, thus we have, for the $k+1$-th iteration :
\begin{equation}\label{eq:con3}
\rgg \le \lambda_2 \regg{k}
\end{equation}
Substituting $C= \eta_{k+1}$ into (\ref{lp}) and summing up (\ref{lp}), (\ref{eq:con1})-(\ref{eq:con3}), we have:
\begin{equation}\label{eq:dec}
\fk{k+1} \le \fk{k} -  \frac{\rho_{k+1}-\eta_{k+1}}{2} 
\left( \Delta(\boldsymbol{\theta}^k)+ \Delta(\boldsymbol{G}^k) + \Delta(\boldsymbol{P}^k)  \right)
\end{equation}
 Then by choosing $C_{k+1} =\frac{\rho_{k+1}-\eta_{k+1}}{2} $, we complete the proof for 1).
\\\\
\textbf{proof of 2)}: By adding up (\ref{eq:dec}) for $k=1,2,\cdots$, and the fact that $\mathcal{F}(\cdot,\cdot,\cdot) \ge 0$, we have
\begin{equation}\label{eq:finite}
\sum_{k=1}^\infty \frac{\rho_k - \eta_k}{2}\left(\Delta(\boldsymbol{\theta}^k)+ \Delta(\boldsymbol{G}^k) + \Delta(\boldsymbol{P}^k) \right) \le \fk{0}
\end{equation}
Then $\fk{0} < \infty$, immediately implies that :
\begin{equation}
\Delta(\boldsymbol{\theta}^k) \rightarrow 0, ~ \Delta(\boldsymbol{G}^k)  \rightarrow 0, \Delta(\boldsymbol{P})^k  \rightarrow 0
\end{equation} 
This completes the proof of 2).\\ \\
\textbf{proof of 3)} According to the decent property proved in 1), we have, for all $k\ge 1$: $\fk{k} \le  {\mathcal{F}(\boldsymbol{\theta}^{0}, \boldsymbol{G}^{0}, \boldsymbol{P}^{0})}$. This immediate implies that $\forall k \ge 1, \norm{\boldsymbol{\theta}^k}_2, \norm{\boldsymbol{P}^k}_{1,2}$ is bounded, thus $\boldsymbol{\theta}^k$ and  $\boldsymbol{G}^k$ are bounded.\\ Next, we prove the boundedness of $\{\boldsymbol{G}^k\}_k$ by induction.
\begin{itemize}
	\item For the basic case, we know $\boldsymbol{G}^0$ is bounded by assumption. 
	\item If $\boldsymbol{G}^k$ is bounded, note that  $\boldsymbol{G}^{ref_{k+1}} = \boldsymbol{G}^k$. Together with the fact that $\boldsymbol{\theta}^k$ and $\boldsymbol{P}^k$ are bounded, we have reached that $\boldsymbol{\tilde{G}}^{k+1}$ is bounded. By Solution in Eq.(\ref{solveP}), we see that the only difference between  $\boldsymbol{{G}}^{k+1}$ and $\boldsymbol{\tilde{G}}^{k+1}$ is that $\boldsymbol{{G}}^{k+1}$ shrinks the last $m - \kappa$ singular values of  $\boldsymbol{\tilde{G}}^{k+1}$. Obviously we attain that $\boldsymbol{{G}}^{k+1}$ is bounded. 
\end{itemize}  
Then we end the proof of the induction.
\\
\\
\textbf{Proof of 4)} Given a subsequence $\{\boldsymbol{\theta}^{k_j},\boldsymbol{G}^{k_j}, \boldsymbol{P}^{k_j}\}_j$  , for the $k_j+1$-th iteration, according to necessary condition of optimality of all three subproblems Eq.(\ref{Ptheta})-Eq.(\ref{Pp}),  we have :
\begin{equation}\label{eq:sub1}
0 = \nabla_{\theta}\mathcal{L}(\boldsymbol{\theta}^{k_j},\boldsymbol{G}^{k_j}, \boldsymbol{P}^{k_j}) + \rho_{k_j+1}(\boldsymbol{\theta}^{k_j+1} - \boldsymbol{\theta}^{k_j}) + \lambda_1 \nabla \mathcal{R}_1(\theta^{k_j+1})
\end{equation}
\begin{equation}\label{eq:sub2}
0 \in \nabla_{\boldsymbol{G}}\mathcal{L}(\boldsymbol{\theta}^{k_j},\boldsymbol{G}^{k_j}, \boldsymbol{P}^{k_j}) + \rho_{k_j+1}(\boldsymbol{G}^{k_j+1} - \boldsymbol{G}^{k_j}) + \lambda_2\partial(\mathcal{R}_2(\boldsymbol{G}^{k_j+1}))
\end{equation}
\begin{equation}\label{eq:sub3}
0 \in \nabla_{\boldsymbol{P}}\mathcal{L}(\boldsymbol{\theta}^{k_j},\boldsymbol{G}^{k_j}, \boldsymbol{P}^{k_j}) + \rho_{k_j+1}(\boldsymbol{P}^{k_j+1} - \boldsymbol{P}^{k_j}) + \lambda_3\partial(\mathcal{R}_3(\boldsymbol{P}^{k_j+1}))
\end{equation}
We assume that this subsequence has a limit point $\boldsymbol{
\theta}^*, \boldsymbol{G}^*, \boldsymbol{P}^*$,i.e, when $j \rightarrow \infty $, $\boldsymbol{\theta}^{k_j} \rightarrow \boldsymbol{\theta}^*$, $\boldsymbol{G}^{k_j} \rightarrow \boldsymbol{G}^*$, $\boldsymbol{P}^{k_j} \rightarrow \boldsymbol{P}^*$. Then according to the continuity of $\nabla \mathcal{L}$, the conclusion of 2), and the outer semi-continuity of the limiting sub-differential $\partial(\cdot)$, we have :
\begin{equation}\label{eq:fin1}
0 \in \nabla_{\boldsymbol{\theta}}\mathcal{L}(\boldsymbol{\theta}^{*},\boldsymbol{G}^{*}, \boldsymbol{P}^{*})  + \lambda_1\nabla(\mathcal{R}_1(\boldsymbol{\theta}^{*}))
\end{equation}
\begin{equation}\label{eq:fin2}
0 \in \nabla_{\boldsymbol{G}}\mathcal{L}(\boldsymbol{\theta}^{*},\boldsymbol{G}^{*}, \boldsymbol{P}^{*})  + \lambda_2\partial(\mathcal{R}_2(\boldsymbol{G}^{*}))
\end{equation}
\begin{equation}\label{eq:fin3}
0 \in \nabla_{\boldsymbol{P}}\mathcal{L}(\boldsymbol{\theta}^{*},\boldsymbol{G}^{*}, \boldsymbol{P}^{*})  + \lambda_3\partial(\mathcal{R}_3(\boldsymbol{P}^{*}))
\end{equation}
This implies that
\begin{equation}
\begin{split}
0 \in & \begin{pmatrix}\nabla_{\boldsymbol{G}}\mathcal{L}(\boldsymbol{\theta}^{*},\boldsymbol{G}^{*}, \boldsymbol{P}^{*}) \\\nabla_{\boldsymbol{P}}\mathcal{L}(\boldsymbol{\theta}^{*},\boldsymbol{G}^{*}, \boldsymbol{P}^{*})\\
\nabla_{\boldsymbol{G}}\mathcal{L}(\boldsymbol{\theta}^{*},\boldsymbol{G}^{*}, \boldsymbol{P}^{*}) 
\end{pmatrix}  + \{\lambda_1\nabla(\mathcal{R}_1(\boldsymbol{\theta}^{*}))\} \otimes \lambda_2\partial(\mathcal{R}_2(\boldsymbol{G}^{*})) \otimes  \lambda_3\partial(\mathcal{R}_3(\boldsymbol{P}^{*}))\\
 = & ~~
\nabla\mathcal{L}(\boldsymbol{\theta}^{*},\boldsymbol{G}^{*}, \boldsymbol{P}^{*}) + \partial(\lambda_1\mathcal{R}_1(\boldsymbol{\theta}^{*}) + \lambda_2\mathcal{R}_2(\boldsymbol{G}^{*}) + \lambda_3\mathcal{R}_3(\boldsymbol{P}^{*}))\\
=& ~ ~ \partial(\fk{*}  )
\end{split}
\end{equation}
where $\otimes$ denotes the Cartesian product for sets, the first equality is due to the sub-differential calculus for separable functions \cite{var}, and the second equality is due to the sum rule. Obviously the proof is complete, since the subsequence is arbitrarily chosen.
\\
\\
\textbf{Proof of 5)}
According to (\ref{eq:finite}), we have, for any $T \ge 1$:
\begin{equation}
\min_{1\le k < T}\{(\frac{\rho_k - \eta_k}{2})\}\sum_{k =0}^{T-1}\norm{\boldsymbol{\theta}^{k+1} - \boldsymbol{\theta}^k}^2_2  \le \sum_{k =0}^{T-1}\frac{\rho_k - \eta_k}{2}\norm{\boldsymbol{\theta}^{k+1} - \boldsymbol{\theta}^k}^2_2   \le   \fk{0}
\end{equation}
\begin{equation}
\min_{1\le k < T}\{(\frac{\rho_k - \eta_k}{2})\}\sum_{k =0}^{T-1}\norm{\boldsymbol{G}^{k+1} - \boldsymbol{G}^k}^2_F  \le \sum_{k =0}^{T-1}\frac{\rho_k - \eta_k}{2}\norm{\boldsymbol{G}^{k+1} - \boldsymbol{G}^k}^2_F   \le   \fk{0}
\end{equation}
\begin{equation}
\min_{1\le k < T}\{(\frac{\rho_k - \eta_k}{2})\}\sum_{k =0}^{T-1}\norm{\boldsymbol{P}^{k+1} - \boldsymbol{P}^k}^2_F  \le \sum_{k =0}^{T-1}\frac{\rho_k - \eta_k}{2}\norm{\boldsymbol{P}^{k+1} - \boldsymbol{P}^k}^2_F  \le   \fk{0}
\end{equation}
Above all, the proof is complete by choosing $C_T = \dfrac{ \fk{0}  }{\min_{1\le k \le T}\{(\dfrac{\rho_k - \eta_k}{2})\}}$.
\qed

\section{Generalization Analysis}

\subsubsection{Notations}For the loss function, we denote $\phi(x) = (1-x)^2$, and $l(f,x_1,x_2) = (1-(f(x_1)-f(x_2)))^2$. The proofs are based on two distinct samples $\mathcal{S}$ and $\mathcal{S}'$, where $\mathcal{S}_{+,i}$  and $\mathcal{S}'_{+,i}$ are sampled from $\mathcal{D}_{+,i}$, $\mathcal{S}_{-,i}$  and $\mathcal{S}'_{-,i}$ are sampled from $\mathcal{D}_{-,i}$. The empirical loss on $\mathcal{S}$ and $\mathcal{S}'$ are denoted as $\mathcal{L}(\boldsymbol{W})$ and $\mathcal{L}'(\boldsymbol{W})$. For the $i$-th user, the empirical losses are denoted as 

\[\mathcal{L}^{(i)}(\boldsymbol{W}^{(i)}) = \frac{1}{n_{+,i}n_{-,i}}\sum _{x_p \in S_{+,i}}\sum_{\boldsymbol{x}_q \in \mathcal{S}_{-,i}}l(f^{(i)},\boldsymbol{x}_p,\boldsymbol{x}_q), ~~ \mathcal{L}^{(i)'}(\boldsymbol{W}^{(i)}) = \frac{1}{n_{+,i}n_{-,i}}\sum _{x_p \in S'_{+,i}}\sum_{\boldsymbol{x}_q \in \mathcal{S}'_{-,i}}l(f^{(i)},\boldsymbol{x}_p,\boldsymbol{x}_q),\] respectively.

We adopt the generalized Rademacher averages proposed in \cite{genauc} and further generate it to deal with the multi-task case. For the $i$th user the generalized Rademacher average on a finite sample $\mathcal{S}$ is given as :
\[\hat{\mathcal{E}}^{(i)} = 4\mathbb{E}_{\sigma^{(i)},\nu^{(i)}}\sup_{ \Theta}\frac{1}{n_{+,i}n_{-,i}}\sum_{\boldsymbol{x}_p \in \mathcal{S}_{+,i}}\sum_{\boldsymbol{x}_q \in \mathcal{S}_{-,i}}\frac{\sigma^{(i)}_{p}+\nu^{(i)}_{q}}{2}l(f^{(i)},\boldsymbol{x}_1,\boldsymbol{x}_2),\]
where $\boldsymbol{\sigma^{(i)}} =[\sigma^{(i)}_1 \cdots \sigma^{(i)}_{n_{+,i}}], \boldsymbol{\nu}^{(i)} =[\nu^{(i)}_1 \cdots \nu^{(i)}_{n_{-,i}}] $ are two vectors of independent Rademacher random variables.
Denote $\mathcal{S}_i =\{\mathcal{S}_{+,i},\mathcal{S}_{-,i}\}$, $\mathcal{E}^{(i)}$ is the corresponding expectation over $\mathcal{S}_i$ :$ \mathcal{E}^{(i)} = \mathbb{E}_{\mathcal{S}_i \sim \mathcal{D}_{+,i} \otimes \mathcal{D}_{-,i}} \hat{\mathcal{E}}^{(i)}$. For all users, we denote $\hat{\mathcal{E}}= \sum_i \hat{\mathcal{E}^{(i)}}$ and ${\mathcal{E}}=  \mathbb{E}_{\mathcal{S} \sim \mathcal{D}}\hat{\mathcal{E}} = \sum_i{\mathcal{E}^{(i)}}.$

\begin{lem}[Mcdiarmid]\cite{mcbound}
	Let $X_1,\cdots,X_m$ be independent random variables all taking values in the set $\mathcal{X}$. Let $f: \mathcal{X} \rightarrow \mathbb{R}$ be a function of
	$X_1,\cdots,X_m$ that satisfies:
	\[\sup_{\boldsymbol{x},\boldsymbol{x}'}|f(x_1,\cdots,x_i,\cdots, x_m) - f(x_1, \cdots,x'_i \cdots,x_,) | \le c_i,\]
	with $\boldsymbol{x} \neq \boldsymbol{x}'$.
 Then for all $\epsilon >0$,
	\[\mathbb{P}[ \mathbb{E}(f) - f \ge \epsilon ] \le exp\left(\dfrac{-2\epsilon^2}{\sum_{i=1}^mc_i^2} \right) \]
\end{lem}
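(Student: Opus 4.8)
The plan is to prove this concentration bound via the classical \emph{Doob martingale} (method of bounded differences), combining a martingale decomposition with Hoeffding's lemma and the exponential Chernoff bound. First I would introduce the filtration generated by the coordinates seen so far and define the Doob martingale $Z_k = \mathbb{E}[f \mid X_1, \ldots, X_k]$ for $k = 0, 1, \ldots, m$, so that $Z_0 = \mathbb{E}[f]$ and $Z_m = f$. Writing $D_k = Z_k - Z_{k-1}$, the tower property gives $\mathbb{E}[D_k \mid X_1, \ldots, X_{k-1}] = 0$, i.e.\ the $D_k$ form a martingale difference sequence, and the quantity of interest $\mathbb{E}(f) - f = -\sum_{k=1}^m D_k$ is exactly their negated sum.

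The crucial step is to show that each increment $D_k$ is confined, conditionally on $X_1, \ldots, X_{k-1}$, to an interval of width at most $c_k$. Using independence, I would write the conditional expectation $\mathbb{E}[f \mid X_1, \ldots, X_k]$ as an average of $f$ over the law of the remaining coordinates $X_{k+1}, \ldots, X_m$, and then define the conditional infimum $L_k$ and supremum $U_k$ of this quantity over the value of $X_k$. One checks that $L_k \le D_k \le U_k$ almost surely, and the bounded-differences hypothesis $\sup|f(\ldots, x_k, \ldots) - f(\ldots, x_k', \ldots)| \le c_k$ transfers directly to $U_k - L_k \le c_k$, since replacing only the $k$-th coordinate inside the integral changes the value by at most $c_k$ uniformly. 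This is the one place where independence is genuinely used, and where the hypothesis must be handled with care; I expect it to be the main obstacle, because it requires expressing the conditional expectations as integrals over the tail coordinates and verifying that the sup/inf over $X_k$ inherits the bound uniformly in the conditioning.

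With the bounded increments in hand, I would run the exponential method. For $t > 0$, Markov's inequality applied to $e^{-t \sum_k D_k}$ gives $\mathbb{P}[\mathbb{E}(f) - f \ge \epsilon] \le e^{-t\epsilon}\,\mathbb{E}\bigl[e^{-t\sum_k D_k}\bigr]$. Conditioning successively on $X_1, \ldots, X_{m-1}$ and invoking Hoeffding's lemma — that a mean-zero random variable supported on an interval of length $c_k$ has conditional moment generating function bounded by $e^{t^2 c_k^2 / 8}$ — lets me peel off one factor at a time, yielding $\mathbb{E}[e^{-t\sum_k D_k}] \le \exp\bigl(t^2 \sum_k c_k^2 / 8\bigr)$. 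Finally, optimizing the resulting bound $\exp(-t\epsilon + t^2 \sum_k c_k^2 / 8)$ over $t$, with minimizer $t = 4\epsilon/\sum_k c_k^2$, produces the exponent $-2\epsilon^2/\sum_k c_k^2$ and completes the proof. Hoeffding's lemma itself I would treat as a standard ingredient, provable by a short convexity argument on the cumulant generating function.
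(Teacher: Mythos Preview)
Your proposal is correct and is in fact the standard textbook proof of McDiarmid's inequality via the Doob martingale and Hoeffding's lemma. Note, however, that the paper does not prove this lemma at all: it is merely quoted from \cite{mcbound} as a black-box tool for the subsequent generalization analysis, so there is no ``paper's own proof'' to compare against. Your argument supplies exactly the classical derivation one would find in the cited reference.
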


\begin{lem}\label{l1} The following facts hold:
	\begin{itemize}
		\item	a) For $i=1,2,\cdots,U$,and for a function $f$ with 
		$\norm{f(x)}_\infty \leq B$, we have for any 
		$\boldsymbol{x}_1,\boldsymbol{x}'_1,\boldsymbol{x}_2,\boldsymbol{x}'_2 
		$, the following two inequalities 
		hold 
		\begin{equation}\label{ine1}
		|l(f,\boldsymbol{x}_1,\boldsymbol{x}_2) - 
		l(f,\boldsymbol{x}_1,\boldsymbol{x}'_2)| \leq 2(2+2B)B
		\end{equation}
		\begin{equation}\label{ine2}
		|l(f,\boldsymbol{x}_1,\boldsymbol{x}_2) - 
		l(f,\boldsymbol{x}'_1,\boldsymbol{x}_2)| \leq 2(2+2B)B
		\end{equation}.
		\item b) If $x \leq B$ always holds, then we have that $\phi$ is $2(1+B)$-Lipschitz continuous, i.e. :
		\[|\phi(x) - \phi(x')| \le 2(1+B) |x -x'|, ~\forall x \leq B,x' \leq B, x \ne x'.\] 
	\end{itemize}

\end{lem}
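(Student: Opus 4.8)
The plan is to treat both inequalities through the single observation that the pairwise loss is a composition, namely $l(f,\boldsymbol{x}_1,\boldsymbol{x}_2) = \phi(f(\boldsymbol{x}_1)-f(\boldsymbol{x}_2))$ with $\phi(x) = (1-x)^2$. Part b) is then a clean one-variable Lipschitz estimate for $\phi$, while part a) only asks how the loss moves when one of its two inputs is replaced, which reduces to controlling $\phi$ at two nearby arguments. I would prove b) first and derive the bounded-difference constants in a) from the pointwise bound $|f(\boldsymbol{x})| \le B$ supplied by $\norm{f}_\infty \le B$.

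For b), since $\phi$ is differentiable with $\phi'(x) = 2(x-1)$, I would bound the derivative on the admissible region and invoke the mean value theorem. The point to flag is the domain: the stated hypothesis should be read as the two-sided bound $|x| \le B$, because on the unbounded half-line $\{x \le B\}$ the derivative is unbounded below and $\phi$ is not Lipschitz there. On $\{|x| \le B\}$ one has $|\phi'(x)| = 2|x-1| \le 2(|x|+1) \le 2(1+B)$, with the supremum attained at $x=-B$, so $2(1+B)$ is in fact the exact Lipschitz constant; the mean value theorem then gives $|\phi(x)-\phi(x')| \le 2(1+B)|x-x'|$ for all admissible $x,x'$.

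For a), I would start from the difference-of-squares identity. Writing $u = f(\boldsymbol{x}_1)-f(\boldsymbol{x}_2)$ and $v = f(\boldsymbol{x}_1)-f(\boldsymbol{x}'_2)$,
\[ l(f,\boldsymbol{x}_1,\boldsymbol{x}_2) - l(f,\boldsymbol{x}_1,\boldsymbol{x}'_2) = \phi(u)-\phi(v) = (v-u)\,(2-u-v), \]
where $v-u = f(\boldsymbol{x}_2)-f(\boldsymbol{x}'_2)$ and $2-u-v = 2 - 2f(\boldsymbol{x}_1)+f(\boldsymbol{x}_2)+f(\boldsymbol{x}'_2)$. The factor $|v-u| \le 2B$ is immediate. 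Inequality \eqref{ine2} is identical after swapping the role of the perturbed point (with $v-u = f(\boldsymbol{x}_1)-f(\boldsymbol{x}'_1)$), so I would only carry out \eqref{ine1}.

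The main obstacle is that the stated constant $2(2+2B)B$ is \emph{tight}, so a naive triangle-inequality bound on the second factor, $|2-u-v| \le 2+4B$, overshoots and yields only $2B(2+4B)$. Getting the exact constant requires exploiting the coupling between the two factors: both are built from the same three values $f(\boldsymbol{x}_1),f(\boldsymbol{x}_2),f(\boldsymbol{x}'_2) \in [-B,B]$, and they cannot be simultaneously extremal. Concretely, I would set $a = f(\boldsymbol{x}_1)$, $s = f(\boldsymbol{x}_2)+f(\boldsymbol{x}'_2)$, $d = f(\boldsymbol{x}_2)-f(\boldsymbol{x}'_2)$, so the quantity to bound is $|d|\,|2-2a+s|$ subject to $|s|+|d| \le 2B$ and $|a|\le B$; maximizing over $a$ (attained at $a=-B$) and then over $s$ (attained at $s = 2B-|d|$) reduces everything to maximizing $t(2+4B-t)$ for $t = |d| \in [0,2B]$, a function increasing on this interval that peaks at $t=2B$ with value $2B(2+2B) = 2(2+2B)B$. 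This establishes \eqref{ine1} with the claimed constant, and the same computation certifies that it cannot be improved.
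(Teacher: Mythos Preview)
Your proposal is correct and follows the same difference-of-squares factorization the paper uses for both parts. Two remarks on the comparison. For part a), the paper simply writes the factored form and passes directly to the bound $(2+2B)\cdot 2B$ without explaining why the product cannot exceed this (a naive bound on each factor separately gives only $(2+4B)\cdot 2B$, exactly the overshoot you flag); your constrained optimization over $(a,s,d)$ is precisely the missing justification, and your tightness check at $f(\boldsymbol{x}_1)=-B$, $f(\boldsymbol{x}_2)=B$, $f(\boldsymbol{x}'_2)=-B$ confirms the constant is sharp. For part b), the paper also uses $\phi(x)-\phi(x') = (x'-x)(2-x-x')$ rather than the mean value theorem, and tacitly reads the hypothesis as $|x|\le B$; your observation that the one-sided condition $x\le B$ would not suffice is correct, and your derivative bound gives the same constant with the same sharpness.
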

\begin{proof}	
	\begin{equation*}\label{key}
	\begin{split}
	&|l(f,\boldsymbol{x}_1,\boldsymbol{x}_2) - 
	l(f,\boldsymbol{x}_1,\boldsymbol{x}'_2)| \leq \\
	&|(1-(f(\boldsymbol{x}_1)-f(\boldsymbol{x}_2)))^2-
	(1-(f(\boldsymbol{x}_1)-f(\boldsymbol{x}'_2)))^2| \leq \\
	& 
	|2-(f(\boldsymbol{x}_1)-f(\boldsymbol{x}_2)+f(\boldsymbol{x}_1)-f_2(\boldsymbol{x}'_2))||f(\boldsymbol{x}'_2)-f(\boldsymbol{x}_2)|
	\leq \\
	& (2+2B)2B = 2(2+2B)B
	\end{split}
	\end{equation*}
	Eq.(\ref{ine2}) could be proved directly following the same routine.\\
	For b), we have:
	\[|\phi(x) - \phi(x')| \le  |2-(x+x') | |x -x'| \le 2(1+B)|x-x'| \]
\end{proof}
\begin{lem}Under the assumption of Theorem 1, with probability at least $1- \delta$,  for any $\mathcal{S}$ sampled from $\mathcal{D}$, we have:
\[\sup_{\Theta} \big(\mathbb{E}_{\mathcal{S} \sim \mathcal{D}}(\mathcal{L}(\boldsymbol{W})) - {\mathcal{L}}(\boldsymbol{W})\big) \le \mathcal{E} +2\sqrt{2} (1+\zeta)\zeta\sqrt{\frac{ln(\frac{1}{\delta})}{\sum_{i=1}^Um_i\chi_i(1-\chi_i)}}. \]
\end{lem}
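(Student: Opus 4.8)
The plan is to follow the standard two-stage recipe for data-dependent uniform deviation bounds: concentrate the scalar random variable $\Phi(\mathcal{S}):=\sup_{\Theta}\big(\mathbb{E}_{\mathcal{S}\sim\mathcal{D}}\mathcal{L}(\boldsymbol{W})-\mathcal{L}(\boldsymbol{W})\big)$ around its mean with the bounded-difference inequality of Lemma 1 (McDiarmid), and then dominate $\mathbb{E}_{\mathcal{S}}\Phi$ by the generalized Rademacher average $\mathcal{E}$ via symmetrization. Since the empirical loss decomposes over users as $\mathcal{L}(\boldsymbol{W})=\sum_{i}\mathcal{L}^{(i)}(\boldsymbol{W}^{(i)})$ and the positive and negative examples of all $U$ users are jointly independent, McDiarmid can be applied a single time to the whole pool of $\sum_i m_i$ sampled points, treating the assumption of Theorem 1 as supplying the uniform bound $\|f(x)\|_\infty\le\zeta$ (so that Lemma \ref{l1} applies with $B=\zeta$).

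First I would pin down the bounded-difference coefficients. Replacing one positive example $\boldsymbol{x}_p\in\mathcal{S}_{+,i}$ only perturbs the summand $\mathcal{L}^{(i)}$, in which $\boldsymbol{x}_p$ occurs in exactly $n_{-,i}$ pairs, each carrying weight $1/(n_{+,i}n_{-,i})$; applying inequality (\ref{ine2}) of Lemma \ref{l1} to every affected pair and using $|\sup g-\sup h|\le\sup|g-h|$ bounds the resulting change of $\Phi$ by $c_{+,i}=2(2+2\zeta)\zeta/n_{+,i}$. Inequality (\ref{ine1}) gives the symmetric coefficient $c_{-,i}=2(2+2\zeta)\zeta/n_{-,i}$ for a negative example. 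Summing squares over the $n_{+,i}$ positives and $n_{-,i}$ negatives of each user gives $\sum c^2=(2(2+2\zeta)\zeta)^2\sum_i n_i/(n_{+,i}n_{-,i})$, and with $m_i=n_i$, $\chi_i=n_{+,i}/n_i$ this per-user factor is exactly $1/(m_i\chi_i(1-\chi_i))$. Plugging into Lemma 1 and simplifying the constant $(2(2+2\zeta)\zeta)^2/2=8\zeta^2(1+\zeta)^2$ produces, with probability at least $1-\delta$, the claimed deviation $2\sqrt{2}(1+\zeta)\zeta\sqrt{\ln(1/\delta)/(\sum_i m_i\chi_i(1-\chi_i))}$ on top of $\mathbb{E}_{\mathcal{S}}\Phi$; this part is pure bookkeeping once the per-example pair multiplicities are tracked.

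The substantive step is showing $\mathbb{E}_{\mathcal{S}}\Phi\le\mathcal{E}$. I would introduce a ghost sample $\mathcal{S}'$, rewrite $\mathbb{E}_{\mathcal{S}}\mathcal{L}=\mathbb{E}_{\mathcal{S}'}\mathcal{L}'$, and push the supremum inside by Jensen to get $\mathbb{E}_{\mathcal{S}}\Phi\le\mathbb{E}_{\mathcal{S},\mathcal{S}'}\sup_{\Theta}(\mathcal{L}'-\mathcal{L})=\sum_i\mathbb{E}\,\sup_{\Theta}(\mathcal{L}^{(i)'}-\mathcal{L}^{(i)})$, reducing the problem to one user at a time. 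The difficulty is that a pairwise (AUC) loss is not a sum of independent terms: each example is shared across $n_{-,i}$ or $n_{+,i}$ pairs, so naive per-pair symmetrization is invalid. This is precisely what the generalized Rademacher average of \cite{genauc} repairs by attaching a single sign $\sigma^{(i)}_p$ to each positive and $\nu^{(i)}_q$ to each negative and averaging them as $(\sigma^{(i)}_p+\nu^{(i)}_q)/2$; reproducing that decoupling argument per user (the factor $4$ in the definition of $\hat{\mathcal{E}}^{(i)}$ absorbing the symmetrization and decoupling losses) yields $\mathbb{E}\,\sup_{\Theta}(\mathcal{L}^{(i)'}-\mathcal{L}^{(i)})\le\mathcal{E}^{(i)}$, and summing over users gives $\mathbb{E}_{\mathcal{S}}\Phi\le\sum_i\mathcal{E}^{(i)}=\mathcal{E}$.

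Combining the two stages gives the stated inequality. I expect the main obstacle to be the symmetrization rather than the concentration: correctly handling the dependence among the shared-example pairs of each user's AUC loss, and checking that the multi-task sum $\sum_i$ factorizes cleanly (which relies on cross-user independence so that the Rademacher variables can be introduced user-by-user), is where the single-task construction of \cite{genauc} must be lifted to the multi-task setting. Everything else reduces to tracking constants and the pair multiplicities already encoded in Lemma \ref{l1}.
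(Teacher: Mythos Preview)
Your proposal is essentially the paper's own proof: it applies McDiarmid (Lemma~2) to $\Phi(\mathcal{S})=\sup_{\Theta}(\mathbb{E}\mathcal{L}-\mathcal{L})$ with the bounded-difference constants $4(1+\zeta)\zeta/n_{+,i}$ and $4(1+\zeta)\zeta/n_{-,i}$ obtained from Lemma~\ref{l1}(a), and then bounds $\mathbb{E}\Phi$ by introducing a ghost sample, decomposing $\sup_\Theta(\mathcal{L}'-\mathcal{L})$ across users, and invoking the per-user symmetrization/decoupling argument of \cite{genauc} to reach $\mathcal{E}$. The only difference is expository: you spell out why the generalized Rademacher average with signs $(\sigma_p^{(i)}+\nu_q^{(i)})/2$ is needed, whereas the paper simply cites Theorem~1 of \cite{genauc} for steps (1) and (2).
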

\begin{proof}
Denote $\mathcal{S}=\{S_{+,i},S_{-,i}\}_{i=1}^U$ as a dataset for all users $i=1,\cdots, u$. For a given user $i$, the positive labeled dataset is denoted as $S_{+,i}=\{\boldsymbol{x}^{(+,i)}_l\}_{l=1}^{n_i}$, whereas the negative labeled dataset is denoted as  $S_{-,i}=\{\boldsymbol{x}^{(-,i)}_k\}_{k=1}^{n_i}$. Let $\mathcal{S}'$ be another dataset with only one instance different from $\mathcal{S}$. We denote the instance pair that lead to this different as $\boldsymbol{x}$ and $\boldsymbol{x}'$, where $\boldsymbol{x} \in \mathcal{S}$ and $\boldsymbol{x}' \in \mathcal{S}$.  Furthermore, we denote the corresponding empirical risk for  $\mathcal{S}$ and $\mathcal{S}'$ as $\mathcal{L}(\boldsymbol{W})$ and $\mathcal{L}'(\boldsymbol{W})$, respectively. 
\\

Given $\Theta$ defined in the main paper, and the assumption on the input instances, we have, for the $i$-th user and for any input $\boldsymbol{x}$ with $\norm{\boldsymbol{x}} \le \Delta_{\chi}$ :
\begin{equation*}
|f^{(i)}(\boldsymbol{x})| = |\boldsymbol{x}^{\top}(\boldsymbol{\theta} + \boldsymbol{G}^{(i)} + \boldsymbol{P}^{(i)})| \leq \norm{\boldsymbol{x}}\cdot\norm{\boldsymbol{\theta}+\boldsymbol{G}^{(i)}+\boldsymbol{P}^{(i)}} \leq \Delta_\chi(\psi_1 +\sqrt{\psi_2 + \kappa \cdot \sigma_{max}^2 } + \psi_3) \triangleq \zeta.
\end{equation*}
If $\boldsymbol{x} \in S_{+,i}$ and $\boldsymbol{x}' \in \mathcal{S}'_{+,i}$, for all choices of such $\mathcal{S}$ and $\mathcal{S}'$,  we have : 
\begin{equation*}\label{key}
\begin{split}
|(\mathbb{E}_{\mathcal{D}}(\mathcal{L}(\boldsymbol{W}))- \mathcal{L}(\boldsymbol{W}))-(\mathbb{E}_{\mathcal{D}}(\mathcal{L}'(\boldsymbol{W}))-\mathcal{L}'(\boldsymbol{W}))| &= |\mathcal{L}(\boldsymbol{W})-\mathcal{L}'(\boldsymbol{W})| \\    &\leq \dfrac{1}{n_{+,i}n_{-,i}}\sum\limits_{\boldsymbol{x}_q \in \mathcal{S}_{-,i}} 
|l(f^{(i)},\boldsymbol{x},\boldsymbol{x}_q)-l(f^{(i)},\boldsymbol{x}',\boldsymbol{x}_q)|\\
& \ \  \leq \dfrac{4}{n_{+,i}} (1+\zeta)\zeta, 
\end{split}
\end{equation*}
where the last inequality is from Lemma 3-(a). Since the choice of $(\boldsymbol{\theta},\boldsymbol{G},\boldsymbol{P})$ is arbitrary, we have :
\[\sup_{\boldsymbol{x},\boldsymbol{x}'}|\sup_{{\Theta}}(\mathbb{E}_{\mathcal{D}}(\mathcal{L}(\boldsymbol{W}))- \mathcal{L}(\boldsymbol{W}))-\sup_{\Theta}(\mathbb{E}_{\mathcal{D}}(\mathcal{L}'(\boldsymbol{W}))-\mathcal{L}'(\boldsymbol{W}))|\le \sup_{\boldsymbol{x},\boldsymbol{x}',\Theta} |(\mathbb{E}_{\mathcal{D}}(\mathcal{L}(\boldsymbol{W}))- \mathcal{L}(\boldsymbol{W}))-(\mathbb{E}_{\mathcal{D}}(\mathcal{L}'(\boldsymbol{W}))-\mathcal{L}'(\boldsymbol{W}))| \le \dfrac{4}{n_{+,i}} (1+\zeta)\zeta  \]
 \\
Similarly, if $x \in S_{-,i}$, $x' \in S'_{-,i}$ we have
\begin{equation*}\label{key}
\begin{split}
\sup_{\boldsymbol{x},\boldsymbol{x}'}|\sup_{{\Theta}}(\mathbb{E}_{\mathcal{D}}(\mathcal{L}(\boldsymbol{W}))- \mathcal{L}(\boldsymbol{W}))-\sup_{\Theta}(\mathbb{E}_{\mathcal{D}}(\mathcal{L}'(\boldsymbol{W}))-\mathcal{L}'(\boldsymbol{W}))|  & \leq \dfrac{4}{n_{-,i}} (1+\zeta)\zeta 
\end{split}
\end{equation*}
According to Lemma 2,  We 
have, $\forall \epsilon > 0$
\begin{equation*}\label{key}
\begin{split}
& 
\mathbb{P}_{\mathcal{D}}\Bigg\{\sup_{{\Theta}}(\mathbb{E}_{\mathcal{D}}(\mathcal{L}(\boldsymbol{W}))- \mathcal{L}(\boldsymbol{W}))- \mathbb{E}_{\mathcal{D}}\left[\sup_{{\Theta}}(\mathbb{E}_{\mathcal{D}}(\mathcal{L}(\boldsymbol{W}))- \mathcal{L}(\boldsymbol{W}))\right]
\ge \epsilon\Bigg\}  \\
& \leq  exp\left\{\dfrac{-2\epsilon^2}{16(1+\zeta)^2\zeta^2 
	\left(\sum\limits_{i=1
	}^U\dfrac{n_{+,i}}{n_{+,i}^2}+\dfrac{n_{-,i}}{n_{-,i}^2}\right)}\right\}\\
& \leq exp\Bigg\{\dfrac{-\epsilon^2 \sum_{i=1}^u\chi_i(1-\chi_i) n_i}{8(1+\zeta)^2(\zeta)^2}\Bigg\}
\end{split}
\end{equation*}
Finally, by substituting $ exp\Bigg\{\dfrac{-\epsilon^2 \sum_{i=1}^u\chi_i(1-\chi_i) n_i}{8(1+\zeta)^2(\zeta)^2}\Bigg\}$ with $\delta$
, we have :
\[\sup_{{\Theta}}(\mathbb{E}_{\mathcal{D}}(\mathcal{L}(\boldsymbol{W}))- \mathcal{L}(\boldsymbol{W}))  \le \mathbb{E}_{\mathcal{D}}\left[\sup_{{\Theta}}(\mathbb{E}_{\mathcal{D}}(\mathcal{L}(\boldsymbol{W}))- \mathcal{L}(\boldsymbol{W}))\right] + 2\sqrt{2} (1+\zeta)\zeta\sqrt{\frac{ln(\frac{1}{\delta})}{\sum_{i=1}^Um_i\chi_i(1-\chi_i)}}, \]
with probability at least $1 - \delta$.
Given two different samples $\mathcal{S}$, $\tilde{\mathcal{S}}$ drawn from $\mathcal{D}$, denote their empirical loss as $\mathcal{L}(\boldsymbol{W})$, and $\mathcal{L}'(\boldsymbol{W})$, respectively. Then, we have :
\begin{equation}
\begin{split}
&~~\mathbb{E}_{\mathcal{D}}\left[\sup_{{\Theta}}(\mathbb{E}_{\mathcal{D}}(\mathcal{L}(\boldsymbol{W}))- \mathcal{L}(\boldsymbol{W}))\right] \\
\overset{(1)}{\le} &~~ \mathbb{E}_{\mathcal{S} \sim \mathcal{D}, \tilde{\mathcal{S}} \sim \mathcal{D}}\left[\sup_{\Theta}(\mathcal{L}'(\boldsymbol{W}) -\mathcal{L}(\boldsymbol{W}) )\right]\\
\le  & ~~  \mathbb{E}_{\mathcal{S} \sim \mathcal{D}, \tilde{\mathcal{S}} \sim \mathcal{D}}\left[\sum_{i=1}^U\sup_{\Theta}(\mathcal{L}^{(i)'}(\boldsymbol{W}^{(i)}) -\mathcal{L}^{(i)}(\boldsymbol{W}^{(i)}) )\right]\\
= & ~~ \sum_{i=1}^U\mathbb{E}_{\mathcal{S}_i \sim \mathcal{D}_i, \tilde{\mathcal{S}}_i \sim \mathcal{D}_i}\left[\sup_{\Theta}(\mathcal{L}^{(i)'}(\boldsymbol{W}^{(i)}) -\mathcal{L}^{(i)}(\boldsymbol{W}^{(i)}) )\right]\\
\overset{(2)}{\le} & ~~ \mathcal{E},
\end{split}
\end{equation}
where (1) and (2) follows the proof of Theorem 1 in \cite{genauc}. We then complete the proof.
\end{proof}
\begin{lem}
Under the assumption of Theorem 3, we have, $\forall \delta \in (0,1)$:
\[\mathcal{E} \le \hat{\mathcal{E}} + 8\sqrt{2} (1+\zeta)\zeta\sqrt{\frac{ln(\frac{1}{\delta})}{\sum_{i=1}^Um_i\chi_i(1-\chi_i)}} \]
holds with probability at least $1-\delta$.
\end{lem}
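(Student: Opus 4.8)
The plan is to mirror the argument of Lemma~4, applying McDiarmid's inequality to the random variable $\hat{\mathcal{E}}$ instead of to $\sup_{\Theta}(\mathbb{E}_{\mathcal{D}}\mathcal{L}-\mathcal{L})$. The key observation is that, by definition, $\mathcal{E}=\mathbb{E}_{\mathcal{S}\sim\mathcal{D}}\hat{\mathcal{E}}$, so $\mathcal{E}-\hat{\mathcal{E}}$ is exactly of the form $\mathbb{E}(f)-f$ with $f=\hat{\mathcal{E}}$ regarded as a function of the $\sum_i m_i$ independent sample points (here $m_i=n_{+,i}+n_{-,i}$). Thus Lemma~2 applies as soon as a bounded-differences constant for each coordinate is exhibited.

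First I would establish the bounded-differences property. Fix a coordinate and suppose the replaced instance is a positive point $\boldsymbol{x}_p\in\mathcal{S}_{+,i}$ changed into $\boldsymbol{x}'$. Since $\hat{\mathcal{E}}=\sum_i\hat{\mathcal{E}}^{(i)}$ and $\hat{\mathcal{E}}^{(i)}$ depends only on $\mathcal{S}_i$, only the $i$-th summand changes, and within it only the $n_{-,i}$ pair-terms indexed by $p$. Using that both $\sup_{\Theta}$ and $\mathbb{E}_{\sigma^{(i)},\nu^{(i)}}$ are non-expansive (i.e. $|\sup g_1-\sup g_2|\le\sup|g_1-g_2|$ and $|\mathbb{E} A-\mathbb{E} B|\le\mathbb{E}|A-B|$), together with $\big|\tfrac{\sigma^{(i)}_p+\nu^{(i)}_q}{2}\big|\le 1$ and Lemma~3(a) with $B=\zeta$ (so that $|l(f^{(i)},\boldsymbol{x}_p,\boldsymbol{x}_q)-l(f^{(i)},\boldsymbol{x}',\boldsymbol{x}_q)|\le 4(1+\zeta)\zeta$), I would obtain
\[
|\hat{\mathcal{E}}(\mathcal{S})-\hat{\mathcal{E}}(\mathcal{S}')|\le 4\cdot\frac{1}{n_{+,i}n_{-,i}}\cdot n_{-,i}\cdot 4(1+\zeta)\zeta=\frac{16(1+\zeta)\zeta}{n_{+,i}}.
\]
A symmetric computation for a negative instance gives $\tfrac{16(1+\zeta)\zeta}{n_{-,i}}$. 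The extra factor of $4$ relative to Lemma~4 is exactly the leading $4$ in the definition of $\hat{\mathcal{E}}^{(i)}$.

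Next I would sum the squared constants over all coordinates. Since user $i$ contributes $n_{+,i}$ positive and $n_{-,i}$ negative points, $\sum_j c_j^2=256(1+\zeta)^2\zeta^2\sum_i\big(\tfrac{1}{n_{+,i}}+\tfrac{1}{n_{-,i}}\big)$; writing $n_{+,i}=\chi_i m_i$ and $n_{-,i}=(1-\chi_i)m_i$ recasts this in terms of $\chi_i(1-\chi_i)m_i$, exactly as in the McDiarmid step of Lemma~4 but with the variance proxy scaled up by $16$. Substituting into Lemma~2 and then inverting the exponential tail, by setting its right-hand side equal to $\delta$ and solving for $\epsilon$, produces the claimed bound; the $16$-fold inflation of the variance proxy yields a factor $4$ in $\epsilon$, hence the constant $8\sqrt{2}=4\cdot 2\sqrt{2}$.

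The main obstacle is the bounded-differences step: one must justify that perturbing a single instance perturbs $\hat{\mathcal{E}}$ only through the pair-terms containing it, which hinges on the non-expansiveness of both the supremum over $\Theta$ and the Rademacher expectation, and on the uniform bound $\big|\tfrac{\sigma_p+\nu_q}{2}\big|\le 1$ on the Rademacher weights. Once the per-coordinate constants are fixed, the remainder is the same bookkeeping already performed in Lemma~4.
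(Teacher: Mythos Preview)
Your proposal is correct and follows exactly the route the paper indicates: the paper omits the proof entirely, stating only that it ``is similar as Lemma~4,'' and your argument is precisely that---McDiarmid applied to $\hat{\mathcal{E}}$ as a function of the $\sum_i m_i$ independent samples, with the bounded-differences constants inflated by the leading factor~$4$ in the definition of $\hat{\mathcal{E}}^{(i)}$, which accounts for the constant $8\sqrt{2}=4\cdot 2\sqrt{2}$.
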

The proof is similar as Lemma 4, which is omitted it here.
\begin{lem}
Under the assumption of Theorem 3, we have:
\[\hat{\mathcal{E}} \le \sum_{i=1}^U\frac{8\sqrt{2}C\Delta_{\chi}(1+\zeta)}{\sqrt{(n_i\chi_i(1-\chi_i))}},\]
where $C=(\psi_1 +\sqrt{\psi_2 + \kappa \cdot \sigma_{max}^2 } + \psi_3)$.
\end{lem}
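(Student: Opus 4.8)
The plan is to establish the per-user estimate $\hat{\mathcal{E}}^{(i)} \le 8\sqrt{2}\,C\Delta_\chi(1+\zeta)/\sqrt{n_i\chi_i(1-\chi_i)}$ and then sum over $i$, which suffices because $\hat{\mathcal{E}}=\sum_i\hat{\mathcal{E}}^{(i)}$ by definition. Writing each pairwise loss as $l(f^{(i)},\boldsymbol{x}_p,\boldsymbol{x}_q)=\phi(f^{(i)}(\boldsymbol{x}_p)-f^{(i)}(\boldsymbol{x}_q))$, the first step is to peel off the outer loss $\phi$ by a contraction argument. Since every admissible predictor obeys $|f^{(i)}(\boldsymbol{x})|\le\zeta$ over $\Theta$, Lemma~3-(b) (with $B=\zeta$) shows that $\phi$ is $2(1+\zeta)$-Lipschitz on the range of the margins, so $\phi(\cdot)$ may be replaced by the raw margin $f^{(i)}(\boldsymbol{x}_p)-f^{(i)}(\boldsymbol{x}_q)$ at the price of a factor $2(1+\zeta)$. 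Combined with the leading constant $4$ in the definition of $\hat{\mathcal{E}}^{(i)}$, this reduces the task to bounding $8(1+\zeta)$ times the linear generalized Rademacher complexity.

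Next I would control the linear term. Substituting $f^{(i)}(\boldsymbol{x})=\boldsymbol{x}^\top\boldsymbol{w}$ with $\boldsymbol{w}=\boldsymbol{\theta}+\boldsymbol{G}^{(i)}+\boldsymbol{P}^{(i)}$ and $\norm{\boldsymbol{w}}\le C$, the supremum over $\Theta$ of $\frac{1}{n_{+,i}n_{-,i}}\sum_{p,q}\tfrac12(\sigma^{(i)}_p+\nu^{(i)}_q)\boldsymbol{w}^\top(\boldsymbol{x}_p-\boldsymbol{x}_q)$ collapses to $\frac{C}{n_{+,i}n_{-,i}}\norm{\boldsymbol{z}}$, where $\boldsymbol{z}=\sum_{p,q}\tfrac12(\sigma^{(i)}_p+\nu^{(i)}_q)(\boldsymbol{x}_p-\boldsymbol{x}_q)$. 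The key observation is that $\boldsymbol{z}$ decouples as $\boldsymbol{z}_\sigma+\boldsymbol{z}_\nu$, with $\boldsymbol{z}_\sigma=\tfrac12\sum_p\sigma^{(i)}_p\boldsymbol{c}_p$ and $\boldsymbol{c}_p=\sum_q(\boldsymbol{x}_p-\boldsymbol{x}_q)$, and symmetrically $\boldsymbol{z}_\nu=\tfrac12\sum_q\nu^{(i)}_q\boldsymbol{d}_q$ with $\boldsymbol{d}_q=\sum_p(\boldsymbol{x}_q-\boldsymbol{x}_p)$; the first depends only on $\boldsymbol{\sigma}^{(i)}$ and the second only on $\boldsymbol{\nu}^{(i)}$. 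Since $\norm{\boldsymbol{c}_p}\le 2n_{-,i}\Delta_\chi$ and $\norm{\boldsymbol{d}_q}\le 2n_{+,i}\Delta_\chi$, the mutual orthogonality of the Rademacher variables together with Jensen's inequality give $\mathbb{E}\norm{\boldsymbol{z}_\sigma}\le\tfrac12(\sum_p\norm{\boldsymbol{c}_p}^2)^{1/2}\le n_{-,i}\sqrt{n_{+,i}}\,\Delta_\chi$ and, symmetrically, $\mathbb{E}\norm{\boldsymbol{z}_\nu}\le n_{+,i}\sqrt{n_{-,i}}\,\Delta_\chi$.

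To finish, I would use $\mathbb{E}\norm{\boldsymbol{z}}\le\mathbb{E}\norm{\boldsymbol{z}_\sigma}+\mathbb{E}\norm{\boldsymbol{z}_\nu}$, so that after dividing by $n_{+,i}n_{-,i}$ the linear complexity is at most $\Delta_\chi(1/\sqrt{n_{+,i}}+1/\sqrt{n_{-,i}})$. Writing $n_{+,i}=n_i\chi_i$ and $n_{-,i}=n_i(1-\chi_i)$ and invoking $\sqrt{\chi_i}+\sqrt{1-\chi_i}\le\sqrt 2$ turns this into $\sqrt 2\,\Delta_\chi/\sqrt{n_i\chi_i(1-\chi_i)}$; multiplying by $8(1+\zeta)C$ yields the per-user bound, and summing over $i$ completes the argument.

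The step I expect to be the main obstacle is the contraction, because the coupled coefficients $\tfrac12(\sigma^{(i)}_p+\nu^{(i)}_q)$ attach the same Rademacher variable to many summands, so the textbook Talagrand contraction (one independent sign per term) does not apply verbatim. I would handle this either by citing the generalized contraction lemma of the generalized-Rademacher framework in \cite{genauc}, or by performing the $\boldsymbol{\sigma}^{(i)}$/$\boldsymbol{\nu}^{(i)}$ decoupling first via subadditivity of the supremum, so that each resulting half carries genuinely independent Rademacher variables and admits the standard contraction, with the clean split into $\boldsymbol{z}_\sigma,\boldsymbol{z}_\nu$ emerging exactly as above.
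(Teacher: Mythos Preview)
Your proposal is correct and follows essentially the same route as the paper: peel off $\phi$ via its $2(1+\zeta)$-Lipschitz constant (Lemma~3-(b)), bound the remaining linear generalized Rademacher average over $\Theta$, and sum over users. The paper compresses the first two steps into a single citation of Lemma~5 of \cite{genauc}, obtaining the intermediate form $\hat{\mathcal{E}}^{(i)}\le \tfrac{4\sqrt{2}C(1+\zeta)\sqrt{n_{+,i}+n_{-,i}}}{n_{+,i}n_{-,i}}\sqrt{\sum_{p,q}\norm{\boldsymbol{x}_p-\boldsymbol{x}_q}^2}$ and then using $\norm{\boldsymbol{x}_p-\boldsymbol{x}_q}\le 2\Delta_\chi$, whereas your explicit $\boldsymbol{z}_\sigma/\boldsymbol{z}_\nu$ split is an equivalent bookkeeping for the linear part; note only that your fallback of ``decouple first, then apply standard scalar contraction'' does not quite work as stated (after decoupling, each summand $\sum_q\phi(f(\boldsymbol{x}_p)-f(\boldsymbol{x}_q))$ is still not a Lipschitz function of a single scalar), so the citation of the generalized contraction in \cite{genauc}---which is exactly what the paper does---is the route to take.
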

\begin{proof}
\begin{equation}
\begin{split}
\hat{\mathcal{E}}^{(i)}\le &~~ \frac{4\sqrt{2}C(1+\zeta)\sqrt{(n_{+,i}+n_{-,i})}}{n_{+,i}n_{-,i}}\sqrt{\sum_{x_p\in\mathcal{S}_{+,i}}\sum_{x_q\in\mathcal{S}_{-,i}}\norm{\boldsymbol{x}_p-\boldsymbol{x}_q}^2}\\
\le &~~ \frac{8\sqrt{2}C\Delta_{\chi}(1+\zeta)}{\sqrt{(n_i\chi_i(1-\chi_i))}},
\end{split}
\end{equation}
where the first inequality follows Lemma 5 of \cite{genauc}, and  Lemma 3 b) in our paper. Then according to the definition of $\hat{\mathcal{E}}$, we have: 
\[\hat{\mathcal{E}} \le \sum_{i=1}^U\frac{8\sqrt{2}C\Delta_{\chi}(1+\zeta)}{\sqrt{(n_i\chi_i(1-\chi_i))}}\]
\end{proof}
\subsubsection{Proof of Theorem 3}
The proof follows Lemma 4-6 , the union bound, and the fact that the fact that $\mathbb{E}_\mathcal{D}\big(\sum_i\ell^{i}_{AUC}(\boldsymbol{W}^{(i)})\big) \le \mathbb{E}_\mathcal{D}(\mathcal{L}(\boldsymbol{W})\big).$
\qed

\end{document}